\documentclass{article}
\PassOptionsToPackage{hyphens}{url}
\usepackage[accepted]{resource/icml2024}
\icmltitlerunning{The Fundamental Limits of Least-Privilege Learning}
\usepackage[pdftex,bookmarksnumbered,bookmarksopen,colorlinks,citecolor={blue!70!black},linkcolor={blue!70!black},urlcolor=blue]{hyperref}
\usepackage{amsthm, amssymb}
\usepackage{xspace}
\usepackage{mathtools}
\usepackage{microtype}
\usepackage{graphicx}
\usepackage{tikz}
\usetikzlibrary{positioning, calc, shapes.geometric, shapes, shapes.multipart, arrows.meta, arrows, decorations.markings, external, trees}
\tikzstyle{line}=[draw]
	\setkeys{Gin}{width=\textwidth, totalheight=\textheight, keepaspectratio}
  	\graphicspath{{graphics/}}
\usepackage{caption}
\usepackage{subcaption}

\usepackage{booktabs} 
\usepackage{nicefrac}
\usepackage{tabularx}
\usepackage{wrapfig}
\usepackage{fontawesome}
\usepackage[capitalize]{cleveref}

\usetikzlibrary{patterns}
\usepackage{scalefnt}
\usepackage{pgfplots}
\pgfplotsset{compat = newest}
\usetikzlibrary{tikzmark,shapes.misc}


\providecommand{\UsePackageFor}[2]{ \ifx#2\undefined\usepackage{#1}\fi }

\UsePackageFor{amssymb}{\boxtimes}
\usepackage[normalem]{ulem}		
\usepackage{bbding}				
\usepackage{endnotes}			
\usepackage{hyperref}			
\usepackage{resource/comment/hyperendnotes}	
\usepackage{pdfcolfoot}			
\usepackage{xcolor}

%
\ifdefined\OrigFootnote\relax\else
	\newenvironment{FootnoteContent}{}{}
	\let\OrigFootnote\footnote
	\let\OrigFootnoteText\footnotetext
	\renewcommand{\footnotetext}[1]{\OrigFootnoteText{\begin{FootnoteContent}#1\end{FootnoteContent}}}
	\renewcommand{\footnote    }[1]{\OrigFootnote    {\begin{FootnoteContent}#1\end{FootnoteContent}}}
\fi

%
\definecolor{PurplePlum}{rgb}{0.1,0,0.55}
\definecolor{Brown}{rgb}{0.5,.25,0}
\definecolor{Orange}{rgb}{1,.3,0}
\definecolor{Gray}{rgb}{.7,.7,.7}
\definecolor{DarkGreen}{rgb}{.1,.41,.1}
\definecolor{Turquoise}{HTML}{00CED1}

\newif\ifBleck
\newcommand\Bleck {\Blecktrue} 

\newcommand\Colour[1] {\color{#1}}

\newcommand\PrintToCLinks{	
  {\Colour{blue}\mbox{
    \hyperlink{w1619}{\sf$\rightarrow$~top}\quad
    \hyperlink{w1031}{\sf$\rightarrow$~toc}\quad
    \hyperlink{w1148}{\sf$\rightarrow$~lof}\quad
    \hyperlink{GreenRoom}{\sf$\rightarrow$~gr}\quad
    \hyperlink{EndNotes}{\sf$\rightarrow$~en}\quad
    \hyperlink{Sargasso}{\sf$\rightarrow$~sg}\quad
    \hyperlink{Index}{\sf$\rightarrow$~idx}
  }}
}
\makeatletter 
\newcommand\ToCLinks{
  \ifx\@onlypreamble\@notprerr		
    \hypertarget{w1619}{}			
  \else
    \AtBeginDocument{\hypertarget{w1619}{}}	
  \fi

  \ifBleck\else	
    \ifdefined\cofoot
      \cofoot{\PrintToCLinks}
      \cefoot{\PrintToCLinks}
    \else
      \def\@oddfoot{\PrintToCLinks}
      \def\@evenfoot{\PrintToCLinks}
    \fi
 \fi
}
\makeatother

\newif\ifEndNotes 

\newcommand\FnSym{{\scriptsize\PencilLeftDown\kern.1em}}		
\newcommand\EnSym {{$\bigtriangledown$}}



\def\MarkupsHowto{} 
\newcommand{\MarkupsHowtoAdd}[1]{\expandafter\def\expandafter\MarkupsHowto\expandafter{\MarkupsHowto{}#1}} 

\newif\ifMarkupsHowtoPrinted 
\newif\ifSuppress 

\newcommand\MakeMarkups[3][.]{

     \Suppressfalse
     \ifBleck\Suppresstrue\fi
     \ifx0#1\Suppresstrue\fi
     \ifx1#1\Suppressfalse\fi

     \expandafter\providecommand\csname#2x\endcsname {} 
     \ifSuppress\expandafter\renewcommand\csname#2x\endcsname{\relax}\else
                       \expandafter\renewcommand\csname#2x\endcsname{#3}\fi

     \expandafter\providecommand\csname#2\endcsname {} 
     \ifSuppress\expandafter\renewcommand\csname#2\endcsname[1]{##1}\else
                       \expandafter\renewcommand\csname#2\endcsname[1]{{\csname#2x\endcsname##1}}\fi

     \expandafter\providecommand\csname#2d\endcsname {} 
     \ifSuppress\expandafter\renewcommand\csname#2d\endcsname[1]{\relax}\else
                       \expandafter\renewcommand\csname#2d\endcsname[1]{{\csname#2x\endcsname\sout{##1}}}\fi

     \expandafter\providecommand\csname#2r\endcsname {} 
     \ifSuppress\expandafter\renewcommand\csname#2r\endcsname[2]{{##2}}\else
                       \expandafter\renewcommand\csname#2r\endcsname[2]{\csname#2d\endcsname{##1} \csname#2\endcsname{##2}}\fi

     \expandafter\providecommand\csname#2i\endcsname {} 
     \ifSuppress\expandafter\renewcommand\csname#2i\endcsname[1]{\relax}\else
                       \expandafter\renewcommand\csname#2i\endcsname[1]{\csname#2\endcsname{##1}}\fi

     \expandafter\providecommand\csname#2t\endcsname {} 
     \ifSuppress\expandafter\renewcommand\csname#2t\endcsname[1]{\relax}\else
                       \expandafter\renewcommand\csname#2t\endcsname[1]{{\csname#2x\endcsname{\mbox{$\langle\!\langle$}##1{\csname#2x\endcsname\mbox{$\rangle\!\rangle$}}}}}\fi

     \expandafter\providecommand\csname#2b\endcsname {} 
     \ifSuppress\expandafter\renewcommand\csname#2b\endcsname[1][empty]{\relax}\else
                       \expandafter\renewcommand\csname#2b\endcsname[1][\empty]{\ifx\empty##1\empty
                       	\label{#2-bookmark} 
                              \marginpar [\raggedleft\csname#2\endcsname{{\footnotesize\fbox{#2 working here}}~$\Longrightarrow$}]
                                                {\csname#2\endcsname{$\Longleftarrow$~{\footnotesize\fbox{#2 working here}}}}
                       \else 
                       	\marginpar [\raggedleft\csname#2\endcsname{\ifx\empty##1\empty\else\fbox{\tiny\parbox{8em}{\raggedright##1}}~\fi$\Longrightarrow$}]
                                                {\csname#2\endcsname{$\Longleftarrow$\ifx\empty##1\empty\else~{\tiny\fbox{\parbox{8em}{\raggedright##1}}}\fi}}\fi}\fi

     \expandafter\providecommand\csname#2TD\endcsname {} 
     \ifSuppress\expandafter\renewcommand\csname#2TD\endcsname{\relax}\else
                       \expandafter\renewcommand\csname#2TD\endcsname{\csname#2\endcsname{\fbox{#2 to do}}}\fi

     \expandafter\providecommand\csname#2Bar\endcsname {} 
     \ifSuppress\expandafter\renewcommand\csname#2Bar\endcsname{\relax}\else
                       \expandafter\renewcommand\csname#2Bar\endcsname{\csname#2\endcsname{\scriptsize\XSolidBrush}}\fi

     \expandafter\providecommand\csname#2f\endcsname {} 
     \ifSuppress\expandafter\renewcommand\csname#2f\endcsname[2][]{\relax}\else
      \expandafter\renewcommand\csname#2f\endcsname[2][\empty]{ 
        {\mbox{\csname#2x\endcsname\tiny$\boxtimes$}\marginpar{\hsize1cm\csname#2x\endcsname\fbox{\FnSym\footnotemark}}\relax
        \footnotetext{\csname#2x\endcsname##2}}}\fi

     \expandafter\providecommand\csname#2e\endcsname {}
     \ifSuppress\expandafter\renewcommand\csname#2e\endcsname[1]{\relax}\else%
      \expandafter\renewcommand\csname#2e\endcsname[1]{%
       \global\EndNotestrue
       \mbox{\scriptsize\csname#2x\endcsname$\boxtimes$}\relax%
       \marginpar{\hsize1cm\csname#2x\endcsname\fbox{\EnSym\endnotemark%
                          \hypertarget{ENmark\thepage-\theendnote}{}~\hyperlink{ENtext\thepage-\theendnote}{{\Colour{blue}$\downarrow$}}}%
       }%
       {
        \def\zz{\noexpand#3}%
        \edef\z{~{[Endnote \theendnote\ %
        on p.\noexpand\hypertarget{ENtext\thepage-\theendnote}{}\thepage%
                    ~\noexpand\hyperlink{ENmark\thepage-\theendnote}{{\noexpand\Colour{blue}$\uparrow$}}]}%
        }%
        \expandafter\endnotetext\expandafter{\z\vspace{2ex}\\ ##1\newpage}%
       }
      }\fi

     \expandafter\providecommand\csname#2n\endcsname {}
     \ifSuppress\expandafter\renewcommand\csname#2n\endcsname[1]{\relax}\else%
      \expandafter\renewcommand\csname#2n\endcsname[1]{%
       \global\EndNotestrue
    \marginpar{{\tiny\endnotemark}\hypertarget{ENmark\thepage-\theendnote}{}~\hyperlink{ENtext\thepage-\theendnote}{}}
       {
        \def\zz{\noexpand#3}%
        \edef\z{~{\zz[Endnote (deferred) 
        from p.\noexpand\hypertarget{ENtext\thepage-\theendnote}{}\thepage%
        ]}%
        }%
        \expandafter\endnotetext\expandafter{\z\vspace{2ex}\\ ##1\newpage}%
       }
      }\fi

     \expandafter\providecommand\csname#2fe\endcsname {} 
     \ifSuppress\expandafter\renewcommand\csname#2fe\endcsname[2][]{\relax}\else 
      \expandafter\renewcommand\csname#2fe\endcsname[2][]{ 
       \def\File{##1}\relax
       \ifx\File\empty\csname#2f\endcsname{##2}\else 
        \global\EndNotestrue 
        \mbox{\scriptsize\csname#2x\endcsname$\boxtimes$}
        \marginpar{\csname#2x\endcsname\fbox{\FnSym\footnotemark}}\relax
        \footnotetext{~\csname#2x\endcsname##2\
                             --- See [\EnSym\endnotemark\hypertarget{ENmark\thepage-\theendnote}{}
                             \kern-.2em\hyperlink{ENtext\thepage-\theendnote}{{\Colour{blue}$\downarrow$}}].}\relax
       { 
         \def\zz{\noexpand#3}
         \edef\z{~{\zz[Endnote~\thefootnote~on~p.\noexpand\hypertarget{ENtext\thepage-\theendnote}{}\thepage
                     ~\noexpand\hyperlink{ENmark\thepage-\theendnote}
                     {{\noexpand\Colour{blue}\kern-0.1em$\uparrow$}]}}
                     {\noexpand\footnotesize\noexpand\newline\noexpand\hspace*{2em} (~from file {\noexpand\tt\File.tex}~)}
         }
         \expandafter\endnotetext\expandafter{\z~\par\input{##1}\newpage}
        } 
       \fi 
      } 
     \fi 

     \ifSuppress\relax\else\ifBleck\relax\else
      \MarkupsHowtoAdd{\par\csname#2t\endcsname{
       $\backslash$\texttt{#2}$\cdots$\ markups are in \textbf{this} colour\ifx#1..\else\ifx1#1.\else, e.g.\ for #1.\fi\fi
       \ifMarkupsHowtoPrinted\relax\else 
        \global\MarkupsHowtoPrintedtrue 
        \begin{quote}\begin{tabular}{l@{\hspace{2em}}p{.7\linewidth}}
         \multicolumn{2}{l}{\texttt{$\backslash$MakeMarkups\ifx#1.\relax\else[#1]\fi\{#2\}\{{\it$\langle$colour command\/$\rangle$}\}}
         				 --- Defines the macros below:}\\
             & see comments at \texttt{$\backslash$MakeMarkups} definition. \\[1ex]
         \texttt{$\backslash$#2\{$\langle$text$\rangle$\}} & Sets \texttt{$\langle$text$\rangle$} in \texttt{#2}'s colour. \\
         \texttt{$\backslash$#2x} & Changes to \texttt{#2}'s colour (until end of context). \\
         \texttt{$\backslash$#2d\{$\langle$text$\rangle$\}} & Sets \texttt{$\langle$text$\rangle$} in \texttt{#2}'s colour with a strikethrough (i.e.\ delete). \\
         \texttt{$\backslash$#2r\{$\langle$this$\rangle$\}\{$\langle$that$\rangle$\}} &
          Strikes through \texttt{$\langle$this$\rangle$} and inserts \texttt{$\langle$that$\rangle$} (i.e.\ replace). \\
         \texttt{$\backslash$#2f\{$\langle$text$\rangle$\}} & Meta-comment: puts \texttt{$\langle$text$\rangle$} in a \texttt{#2}-footnote with a {\tiny$\boxtimes$} in the main text. \\
         \texttt{$\backslash$#2t\{$\langle$text$\rangle$\}} & Use for meta when  \texttt{$\backslash$#2f} isn't allowed (``Not in outer-par mode.'') \\
         \texttt{$\backslash$#2b[$\langle$optional$\rangle$]} & Marginal pointer, with label for hyper-linking directly there. \\
         \texttt{$\backslash$#2e\{$\langle$text$\rangle$\}} & Puts \texttt{$\langle$text$\rangle$} in a \texttt{#2}-endnote with a (big) $\boxtimes$ in the main text. \\[.5ex]
         \texttt{$\backslash$#2n\{$\langle$text$\rangle$\}} & Like \texttt{$\backslash$#2e}
         except there's no reference from the main text. Good for ``decluttering''
         when you still want to have the footnote- or endnote texts as reminders. \\[.5ex]
         \texttt{$\backslash$#2fe[$\langle$this$\rangle$]\{$\langle$that$\rangle$\}} & Makes a \texttt{$\backslash$#2f\{$\langle$that$\rangle$\}} that refers to a \\
           & \texttt{$\backslash$#2e\{$\langle$contents of file this.tex$\rangle$\}}. \\
           & Without the optional argument, acts as \texttt{$\backslash$#2f\{$\langle$that$\rangle$\}}. \\[.5ex]
         \texttt{$\backslash$#2Bar} & Inserts ``burn after reading'' symbol \csname#2Bar\endcsname, meaning
          \begin{quote}\begin{itemize}\setlength\itemsep{0pt}
           \item If yours is the only \csname#2Bar\endcsname\ in this (presumably someone else's) footnote, and you are happy that the footnote has been addressed,
           go ahead and comment-out the whole footnote. (The \csname#2Bar\endcsname\ is their request for you to ``approve and remove''.)
           \item If you are not happy, delete only your \csname#2Bar\endcsname\ and follow-on in the footnote
            (in your colour, i.e.\ with \texttt{$\backslash$#2x}) saying why you are not happy.
           \item If you are happy, but there are others' burn-after-reading symbols as well as yours, just delete yours; the other people have not yet responded.
          \end{itemize}
          \end{quote}
          The idea is that when everyone's happy, the last person will comment-out the meta-text. \\[0.5ex]
         \texttt{$\backslash$#2TD} & Inserts {\csname#2TD\endcsname}\ . \\
        \end{tabular}\end{quote}
       \fi
      }}
     \fi\fi
}


\newif\ifNoGreenRoom
\newcommand\MakeGreenRoom {\ifBleck\relax\else\ifNoGreenRoom\relax\else
\newcommand\NewGRLabel[1] {\OldGRLabel{GreenRoom-##1}} 
 \newcommand\NewGRRef[1] 
 {\expandafter\ifx\csname r@GreenRoom-##1\endcsname\relax\OldGRRef{##1}\else\OldGRRef{GreenRoom-##1}\fi}
 \let\OldGRLabel\label \let\label\NewGRLabel
 \let\OldGRRef\ref \let\ref\NewGRRef
 \hrule
 ~\\\begin{center}\Huge \hypertarget{GreenRoom}{Green Room}
 \end{center}~\\
 \hrule
\fi\fi}

\newcommand\EndGreenRoom  {\ifBleck\relax\else\ifNoGreenRoom\relax\else
\let\label\OldGRLabel
\let\ref\OldGRRef
\fi\fi}

\newif\ifNoEndNotes

\newif\ifNoSargasso
\newcommand\MakeSargasso {
 \hypertarget{Sargasso}{}
 \newcommand\NewLabel[1] {\OldLabel{Sargasso-##1}} 
 \newcommand\NewRef[1] 
 {\expandafter\ifx\csname r@Sargasso-##1\endcsname\relax\OldRef{##1}\else\OldRef{Sargasso-##1}\fi}
 \let\OldLabel\label \let\label\NewLabel
 \let\OldRef\ref \let\ref\NewRef
\ifBleck\end{document}\else\ifNoSargasso
\relax
\else
  \hrule
  ~\\\begin{center}\Huge Sargasso
  \end{center}~\\
  \hrule
 \fi\fi
}

\newcommand\EndSargasso  {\ifBleck\relax\else\ifNoSargasso\relax\else
\let\label\OldLabel
\let\ref\OldRef
\fi\fi}

\newcommand\EndDocument {\ifBleck\end{document}\fi} 

\newcommand\Cite[2][\empty] {{\Colour{red}\ifx#1\empty[#2]\else[#2,~#1]\fi}}



\MakeMarkups[Carmela]{C}{\Colour{Orange}}
\MakeMarkups[Nicolas]{N}{\Colour{PurplePlum}}
\MakeMarkups[Theresa]{T}{\Colour{DarkGreen}}
\MakeMarkups[Bogdan]{B}{\Colour{Turquoise}}
\Bleck    


\newcommand{\para}[1]{\vspace{1mm}\noindent\textbf{#1.}}


\newcommand{\subfig}[1]{\xspace(\textit{#1})\xspace}
\newcommand{\figurewidth}{\textwidth}

\newtheorem{proposition}{Proposition}
\newtheorem{definition}{Definition}
\newtheorem{theorem}{Theorem}
\newtheorem{corollary}{Corollary}
\newtheorem{lemma}{Lemma}
\newtheorem{remark}{Remark}

\newtheorem{assumption}{Assumption}
\crefname{assumption}{Assumption}{Assumptions}

\newcommand\independent{\protect\mathpalette{\protect\independenT}{\perp}}
\def\independenT#1#2{\mathrel{\rlap{$#1#2$}\mkern2mu{#1#2}}}

\newcommand{\define}{~\triangleq~}
\newcommand{\CNN}{\textsc{CNN256}\xspace}
\newcommand{\ResNet}{\textsc{ResNet18}\xspace}

\newcommand{\dataset}{D}
\newcommand{\dataDist}{P}

\newcommand{\joint}{{\inputFeatures,\labelY,\labelS}}
\newcommand{\numSamples}{N}

\newcommand{\labelS}{S}
\newcommand{\labelY}{Y}
\newcommand{\labelSVal}{s}
\newcommand{\labelYVal}{y}
\newcommand{\labelYSpace}{\mathbb{Y}}
\newcommand{\labelSSpace}{\mathbb{S}}
\newcommand{\record}{\mathbf{r}}
\newcommand{\inputFeatures}{X}
\newcommand{\inputFeaturesVal}{x}
\newcommand{\inputSpace}{\mathbb{X}}
\newcommand{\model}{f}
\newcommand{\modelFeatures}{Z}
\newcommand{\modelFeaturesVal}{z}
\newcommand{\modelFeaturesSpace}{\mathbb{Z}}
\newcommand{\generic}{W}
\newcommand{\genericSpace}{\mathbb{W}}

\newcommand{\utilInfo}{I_\alpha}
\newcommand{\gain}{I_\infty}
\newcommand{\gainEst}{\smash{\tilde{I}_\infty}}
\newcommand{\maxLeakage}{\mathcal{L}}

\newcommand{\supp}{\mathrm{supp}}



\definecolor{plotBlue}{RGB}{136,204,238}
\definecolor{plotRed}{RGB}{204,102,119}
\definecolor{plotYellow}{RGB}{221, 204, 119}
\definecolor{plotPurple}{RGB}{51,34,136}
\definecolor{plotGreen}{RGB}{17,119,51}
\definecolor{plotPink}{RGB}{170,68,153}

\Bleck

\begin{document}
\twocolumn[

    \icmltitle{The Fundamental Limits of Least-Privilege~Learning}

    \icmlsetsymbol{equal}{*}
    
    \begin{icmlauthorlist}
        \icmlauthor{Theresa Stadler}{epfl,equal}
        \icmlauthor{Bogdan Kulynych}{chuv,equal}
        \icmlauthor{Michael C. Gastpar}{epfl}
        \icmlauthor{Nicolas Papernot}{vector}
        \icmlauthor{Carmela Troncoso}{epfl}
    \end{icmlauthorlist}
    
    \icmlaffiliation{epfl}{EPFL, Lausanne, Switzerland}
    \icmlaffiliation{chuv}{Lausanne University Hospital \& University of Lausanne, Switzerland}
    \icmlaffiliation{vector}{University of Toronto \& Vector Institute, Toronto, Canada}
    
    \icmlcorrespondingauthor{Theresa Stadler}{theresa.stadler@epfl.ch}

    \icmlkeywords{Privacy, Trustworthy Machine Learning}

\vskip 0.3in

]
\printAffiliationsAndNotice{\icmlEqualContribution}

\begin{abstract}
The promise of least-privilege learning -- to find feature representations that are useful for a learning task but prevent inference of any sensitive information unrelated to this task -- is highly appealing. However, so far this concept has only been stated informally. It thus remains an open question whether and how we can achieve this goal.
In this work, we provide the \emph{first formalisation of the least-privilege principle for machine learning} and characterise its feasibility.
We prove that there is a \emph{fundamental trade-off} between a representation's utility for a given task and its leakage beyond the intended task: it is not possible to learn representations that have high utility for the intended task but, at the same time prevent inference of any attribute other than the task label itself. This trade-off holds under realistic assumptions on the data distribution and {regardless} of the technique used to learn the feature mappings that produce these representations.
We empirically validate this result for a wide range of learning techniques, model architectures, and datasets.
\end{abstract}

\section{Introduction}
\label{sec:intro}
The need to reveal data to untrusted service providers to obtain value from machine learning as a service
(MLaaS) puts individuals at risk of data misuse and harmful inferences. The service provider observes raw data records at training or inference time and might abuse them for purposes other than the intended learning task. For instance, an image shared with a provider for the purpose of face verification might be misused to infer an individual's race and lead to discrimination~\citep{CitronS22}.

\para{Sharing Representations to Prevent Data Misuse} Collaborative learning and model partitioning claim to prevent such misuse in model training and inference MLaaS settings, respectively.
In both cases, individuals share a feature representation of their raw data with the service provider; in the form of model updates in the collaborative learning setting~\citep{FedGoogle, FedApple} and of feature encodings in the model partitioning setting~\citep{OsiaTS18, ChiOY18, WangZB18, BrownMM22}.
Proponents of both techniques argue that, because individuals \emph{only share a representation of their data}, and not the data itself, the service provider no longer has access to information that might be abused for purposes other than the intended task.

\para{Unintended Feature Leakage} However, previous research shows that a passive adversary can misuse the shared representations to infer data attributes that are unrelated to the learning task, or even reconstruct data records~\citep{BoenischDS23,GanjuWY18,MelisSDS19,SongS19}.
For instance, \citet{SongS19} show that features extracted from a gender classification model also reveal an individual's race.
Even higher-layer features, that are assumed to be more learning-task specific, might lead to such unexpected inferences~\citep{MoBMH21}.
These examples show that limiting data access to feature representations does not necessarily prevent unintended information leakage and thus, does not fully mitigate the risk of data misuse associated with attributes other than the learning task.

\para{Least-Privilege Learning} Some works~\citep{OsiaTS18, MelisSDS19, BrownMM22} suggest that the solution to this issue is to ensure that the feature mappings that produce such representations follow the \emph{least-privilege principle} (LPP). That is, to enforce that the representations shared with the service provider  \emph{only contain information relevant to the learning task, and nothing else}.
Previous work repeatedly suggests least-privilege learning (LPL) as a promising avenue to prevent data misuse. Yet, it has only been described informally and lacks a precise definition.

\para{Contributions} We make the following contributions:
\begin{enumerate}
    \item We provide the first formalisation of the least-privilege principle for machine learning as a variant of the generalized Conditional Entropy Bottleneck problem~\citep{Fisher20}. Our formalisation enables us to characterise the limits of unintended information leakage in MLaaS settings.
    
    \item We observe that any feature representation must at least leak all information about the input data that can be inferred from the learning task label itself. 
    We show experimentally that this \emph{fundamental leakage} can reveal information that is not intuitively related to the intended task and that could be misused for harmful inferences.
    
    \item We formally prove a fundamental trade-off: under realistic assumptions on the data distribution, it is not possible to learn feature representations that have high utility for the intended task but, at the same time, restrict information leakage about data attributes other than the intended task to its fundamental leakage.
    
    \item We experimentally demonstrate this trade-off across learning techniques, model architectures, and datasets. We show that as long as the representations have utility for their intended task, there exist attributes other than the task label that can be inferred from the shared representations and thus violate the least-privilege principle.
\end{enumerate}

\para{Related Work} Prior works refer to the LPP but so far only study the problem of \emph{attribute obfuscation} for a \emph{single, fixed sensitive attribute}.
For instance, \citet{SongS19} experimentally observe that current censoring techniques can only prevent a model from learning a sensitive attribute at a cost in model utility. Their claim that ``overlearning [of sensitive attributes] is intrinsic''~\citep{SongS19} to machine learning models is  derived from empirical observations on a single learning task and sensitive attribute for a small set of censoring techniques.
\citet{ZhaoCTG20} formally derive a lower bound on the trade-off between a model's performance on its intended learning task and hiding a \emph{fixed sensitive attribute}. 

While these results characterise the trade-off between a representation's utility and leakage with respect to a specific attribute, they do not actually evaluate whether it is possible to learn useful representations that fulfil the LPP. They focus on a single attribute, whereas the LPP demands that the representations shared with the service provider should prevent inference of \emph{any information} other than the intended task~\citep{MelisSDS19, BrownMM22}. Furthermore, neither \citet{SongS19} nor \citet{ZhaoCTG20} consider that useful representations must at least leak any sensitive information already revealed by the task label itself. As we argue in \cref{sec:lpl}, such \emph{fundamental leakage} is crucial to consider when formalising LPL.
\section{Problem Setup}
\label{sec:background}

We consider the common MLaaS setting in which individual data owners, or users, share their data with a service provider for a target task, e.g., model training or inference. Users agree to the usage of their data for the intended task but want to prevent data misuse through the service provider. In this section, we formalize this problem setup.

\para{Notation} Let $\inputFeatures, \labelY, \labelS \sim \dataDist_\joint$ be a set of random variables distributed according to $\dataDist_\joint$ where $\inputFeatures \in \inputSpace$, and $\labelY \in \labelYSpace$ are, respectively, an \emph{example} and its \emph{learning task label}, and $\labelS \in \labelSSpace$ is a \emph{sensitive attribute}.
For any three random variables $\inputFeatures, \labelY, \generic$, we denote by $\labelY -
\inputFeatures - \generic$ \emph{a Markov chain}, which is equivalent to stating that:
$\labelY \independent \generic \mid \inputFeatures$.

\para{Assumptions on the Data Distribution} To make our formal analyses tractable, we assume that the spaces $\inputSpace, \labelYSpace, \labelSSpace$ are discrete and finite; and that the data domain is non-trivial ($|\inputSpace| > 1$ and $|\labelYSpace| > 1$) and has full support.
%
We further make the following assumption about the data distribution:
\begin{assumption}[Strictly positive posterior]
    \label{ass:positivePosterior}
    We say that the \emph{posterior distribution}, $\dataDist_{\labelY \mid \inputFeatures}$, is strictly positive if
    for any $\inputFeaturesVal \in \inputSpace, \labelYVal \in \labelYSpace$ we have $\dataDist_{\labelY \mid \inputFeatures}(\labelYVal \mid
    \inputFeaturesVal) >
    0$.
\end{assumption}
\vspace*{-2mm}
This assumption is realistic in settings where there exists inherent uncertainty about the ground
truth label of a given example. Examples include the presence of \emph{label
noise} introduced by the labelling process~\citep{SongKPSL22}, and, under the Bayesian
interpretation of probability, task labels that are subjective. This is the case, for instance, in many MLaaS applications, such as emotion
recognition or prediction of face attributes (e.g., smiling) which come with uncertainty and labelling unreliability~\citep{RajiBPDH21}.

\para{Problem Setup} In the MLaaS setting, individual \emph{users} hold a set of \emph{data records} $(\inputFeaturesVal_i, \labelYVal_i)$ for $i = 1,\ldots,\numSamples$ and agree to share their data with a \emph{service provider} for a specific \emph{purpose}.
For example, a user might be willing to reveal an image of their face to the provider of a biometric face recognition system to prove their identity.
However, if the user directly reveals the original image $\inputFeatures$, the service provider can \emph{misuse the shared data} for purposes other than the intended task, i.e., to verify the user's identity.

To prevent such data misuse, users share a \emph{representation} of their data that restricts information leakage to the necessary minimum for the \emph{intended purpose} (see \cref{fig:problem_setup}). For instance, in the previous example, a representation that only reveals features relevant to recognising a user's identity. The goal hence is to find a possibly randomized \emph{feature mapping} $\model_E(\cdot)$ that maps input $X$ to a representation $\modelFeatures = \model_E(X)$ that both has \emph{high utility for the intended task} $\labelY$ but \emph{prevents inference of any other data attribute} $\labelS$.
We formalise this goal and characterise its feasibility in the next section.

\begin{figure*}
    \centering
    \resizebox{0.65\linewidth}{!}{
    \input{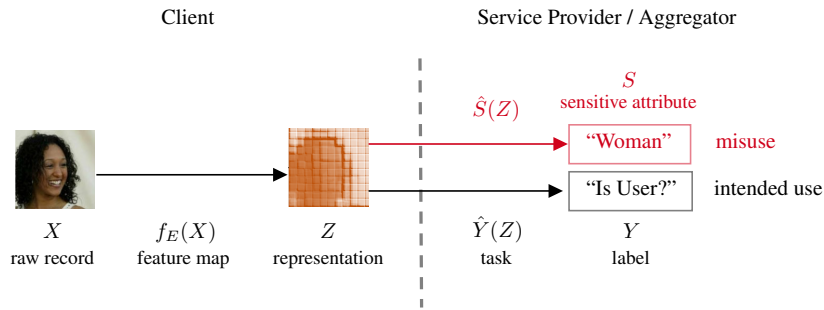}
    }
    \caption{To prevent potential data misuse in a MLaaS setting, users share a representation of their data. These representations should be useful to achieve the intended purpose (verify users' identity) but prevent inference of other data attributes (users' gender) that might lead to harms, such as discrimination.}
    \label{fig:problem_setup}
\end{figure*}

\para{Training vs. Inference Settings} The example above describes a MLaaS \emph{inference} setting in which users directly share a representation of their data for the purpose of predicting the intended task label. However, the same problem setup applies to the collaborative MLaaS \emph{training} setting in which users share gradient updates instead of their raw data to prevent data misuse. As \citet{MelisSDS19} show, these ``gradient updates can [...] be used to infer feature values'' and thus are just a version of the model-generated representations $\modelFeatures = \model_E(\inputFeatures)$.

\section{The Least-Privilege Principle in Machine Learning}
\label{sec:lpl}

\para{The Least-Privilege Security Principle} The LPP is a design principle for building secure
information systems introduced by \citet{SaltzerS75}. Its definition demands that ``Every
program and every user of the system should operate using the least set of privileges necessary to
complete the job.'' In secure systems engineering, a privilege is a clearly defined action that an actor in the system is authorised to carry out. Transferring this concept to the machine learning domain is not trivial. To do so, we have to first quantise the learning process into a set of smaller privileges and then define what is the minimum set of privileges needed to carry out a learning task.

We approach this problem through the lens of \emph{attribute inference}: The goal of the user is to allow inference of the task attribute $\labelY$ from representation $\modelFeatures$ but prevent inference of any data attribute $\labelS$ other than the learning task. We define a \emph{privilege as the ability to learn the value of a particular attribute}, and formalise the LPP in terms of inference gain about data attributes other than the learning task.

\subsection{Utility Measure}
The representation shared with the service provider should be useful for the intended purpose, i.e., contain features relevant to correctly infer task label $\labelY$. One common way to formalise this objective is the \emph{mutual information} between the representations and the task label $I(\labelY; \modelFeatures) = I(\labelY; \model_E(\inputFeatures))$~\citep{AlemiFDM16}.
The higher the information between $\modelFeatures$ and $\labelY$, the more useful the representation will be for the intended task.
We make this notion more general by using a variant of mutual information known as Arimoto's $\alpha$-information~\citep{Arimoto77}, which we denote as $\utilInfo(\labelY; \modelFeatures)$.
In particular, we consider two relevant instantiations of $\alpha$: $\alpha = 1$ and $\alpha = \infty$. In the case that $\alpha = 1$, Arimoto's information is equal to classical Shannon's mutual information $\utilInfo(\labelY; \modelFeatures) = I(\labelY; \modelFeatures)$, described previously. In the case that $\alpha = \infty$, Arimoto's information is the \emph{multiplicative gain} in accuracy of predicting the task label $\labelY$ from the learned representations over baseline guessing~\citep{LiaoKSC19}:
\begin{align}\label{eq:util_gain}
     \gain(\labelY; \modelFeatures) \define \log \frac{\Pr[\labelY = \model_C(\modelFeatures)]}{\Pr[\labelY = \hat \labelY]}
\end{align}
where $\log(\cdot)$ is the base-2 logarithm, $\smash{\model_C(\modelFeatures)}$ denotes the prediction for task label $\labelY$ of a classifier $\model_C$ based on representation $\modelFeatures$, and $\smash{\hat \labelY \define \arg \max_{\labelYVal \in \labelYSpace} \Pr[\labelY = \labelYVal]}$ represents the baseline guess based on the most common attribute value. The latter notion of $I_\infty(Y; Z)$ is especially useful to evaluate utility of representations for classification tasks, as it is normalised prediction accuracy. 

\subsection{Leakage Measure}
\label{subsec:leakage_measure}
As described in \cref{sec:background}, users want to restrict information leakage about data attributes other than the intended task. We evaluate such unintended leakage through the success of an \emph{inference adversary} $\smash{g(W)}$ that tries to infer the value of a sensitive attribute $\labelS$ from information $\generic$. In line with standard practices in security and privacy, we use Bayes-optimal adversaries~\citep{SablayrollesDSO19,ChatzikokolakisCPT23} that achieve optimal inference accuracy and hence measure worst-case inference risk:
\begin{align}
	\label{eq:bayes_adv}
    \hat \labelS(\generic) \define \arg \max_{g:~\genericSpace \rightarrow \labelSSpace} \Pr[\labelS
    = g(\generic)].
\end{align}
We use $\smash{\hat \labelS}$ without the argument to denote the majority class baseline guess: $\smash{\hat \labelS \define \arg \max_{\labelSVal \in \labelSSpace} \Pr[\labelS = \labelSVal]}$.

We measure unintended leakage about sensitive attribute $\labelS$ through an adversary's multiplicative gain (Arimoto's information with $\alpha = \infty$). Formally, the gain of an adversary $\smash{\hat \labelS(\generic)}$ who has access to information $\generic$ over their baseline guess is defined as:
\vspace{-.5em}
\begin{align}
    \gain(\labelS; \generic) \define \log \frac{\Pr[\labelS = \hat
    \labelS(\generic)]}{\Pr[\labelS = \hat \labelS]}.
 \end{align}
We denote the gain of an adversary $\smash{\hat \labelS}(\generic, \generic')$ with access to two sources of information $\generic$ and $\generic'$ over their guess with only one source of information $\generic'$ as:
\vspace{-.2em}
\begin{align}\label{eq:adv_gain}
    \gain(\labelS; \generic \mid \generic') \define \log \frac{\Pr[\labelS = \hat
    \labelS(\generic, \generic')]}{\Pr[\labelS = \hat \labelS(\generic')]}.
 \end{align} 

We use these two measures (\cref{eq:util_gain} as a representation's utility and \cref{eq:adv_gain} as its unintended leakage) to formalise the LPP in machine learning.

\subsection{Strawman Approach: Unconditional Least-Privilege Principle}
The promise of LPL, as suggested by prior work~\citep{OsiaTS18, MelisSDS19, BrownMM22}, is to find a feature representation that only contains information relevant to the ``purpose for which it was designed and \emph{nothing else}''~\citep{BrownMM22}. To formalise the LPP, in contrast to related works on attribute obfuscation, we hence cannot assume the sensitive attribute $\labelS$ to be fixed. Instead, we need to assume that \emph{any attribute} --- e.g., any function of the input --- other than the learning task $\labelS \neq \labelY$ is sensitive and its inference might lead to harm.  
Formally:
\begin{definition}[Unconditional LPP]
\label{def:ulpr}
    Given a data distribution $\dataDist_{\inputFeatures, \labelY}$, a feature map $
    \model_E(\inputFeatures) = \modelFeatures$ satisfies the unconditional LPP with parameter $\gamma$ if
    for any attribute $\labelS \neq \labelY$ which follows the Markov chain $\labelS - \inputFeatures -
    \modelFeatures$, the attribute inference gain is bounded:
    \begin{align}
        \frac{\Pr[\labelS = \hat \labelS(\modelFeatures)]}{\Pr[\labelS = \hat \labelS]} \leq 2^\gamma
    \end{align}
    Equivalently:
    \vspace{-.5em}
    \begin{align}
        \sup_{\labelS \neq \labelY:~\labelS - \inputFeatures - \modelFeatures} \gain(\labelS; \modelFeatures) \leq \gamma
    \end{align}
\end{definition}

Many previous works suggest that it is possible to find a feature map $\model_E(\inputFeatures) = \modelFeatures$ that fulfils the unconditional LPP, and at the same time produces representations with high utility for the learning task~\citep{OsiaTS18, MelisSDS19, BrownMM22}. To address the question whether it is possible to achieve high utility for the intended task and simultaneously satisfy the unconditional LPP, we  formally characterise this trade-off: 

\begin{theorem}[Unconditional LPP and Utility Trade-Off]
\label{th:ulpl_impossibility}
\normalfont
Suppose that $\dataDist_{\labelY \mid \inputFeatures}$ is strictly positive
(\cref{ass:positivePosterior}).
Then, for $\alpha \in \{1, \infty\}$, the following two properties cannot hold at the same time:
\begin{enumerate}
    \item $\modelFeatures = \model_E(\inputFeatures)$ satisfies the unconditional LPP with parameter $\gamma$
    \item $\utilInfo(\labelY; \modelFeatures) > \gamma$
\end{enumerate}
\end{theorem}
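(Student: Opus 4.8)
The plan is to funnel both conditions through one quantity: the maximal leakage $\maxLeakage(\inputFeatures \to \modelFeatures)$ of the feature map $\model_E$. I will show it is simultaneously an upper bound on the utility $\utilInfo(\labelY;\modelFeatures)$ and is reached by the supremum appearing in \cref{def:ulpr}. Then, if $\model_E$ satisfied the unconditional LPP with parameter $\gamma$ and also $\utilInfo(\labelY;\modelFeatures) > \gamma$, we would obtain
$\gamma \ge \sup_{\labelS \neq \labelY}\gain(\labelS;\modelFeatures) = \maxLeakage(\inputFeatures \to \modelFeatures) \ge \utilInfo(\labelY;\modelFeatures) > \gamma$,
a contradiction. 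Throughout I use the variational form $\maxLeakage(\inputFeatures \to \modelFeatures) = \sup_{U:\, U - \inputFeatures - \modelFeatures}\gain(U;\modelFeatures)$, with the supremum over finite-alphabet $U$ forming that Markov chain, together with the fact that it is approached arbitrarily closely by deterministic functions of $\inputFeatures$.

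For the upper bound $\utilInfo(\labelY;\modelFeatures) \le \maxLeakage(\inputFeatures \to \modelFeatures)$, observe first that $\labelY - \inputFeatures - \modelFeatures$, since $\modelFeatures = \model_E(\inputFeatures)$ is produced from $\inputFeatures$ alone (with internal randomness independent of $(\inputFeatures,\labelY)$). When $\alpha = \infty$ we have $\utilInfo(\labelY;\modelFeatures) = \gain(\labelY;\modelFeatures)$, which is one of the terms in the supremum defining $\maxLeakage(\inputFeatures \to \modelFeatures)$, so the bound is immediate. When $\alpha = 1$ we have $\utilInfo(\labelY;\modelFeatures) = I(\labelY;\modelFeatures) \le I(\inputFeatures;\modelFeatures)$ by the data-processing inequality along $\modelFeatures - \inputFeatures - \labelY$, and $I(\inputFeatures;\modelFeatures) \le \maxLeakage(\inputFeatures \to \modelFeatures)$ is the standard fact that Shannon mutual information never exceeds maximal leakage --- a one-line consequence of Jensen's inequality applied to $\sum_{\modelFeaturesVal}\dataDist_\modelFeatures(\modelFeaturesVal)\log\bigl(\max_{\inputFeaturesVal}\dataDist_{\modelFeatures\mid\inputFeatures}(\modelFeaturesVal\mid\inputFeaturesVal)/\dataDist_\modelFeatures(\modelFeaturesVal)\bigr)$, which dominates $I(\inputFeatures;\modelFeatures)$ and is bounded by $\log\sum_{\modelFeaturesVal}\max_{\inputFeaturesVal}\dataDist_{\modelFeatures\mid\inputFeatures}(\modelFeaturesVal\mid\inputFeaturesVal) = \maxLeakage(\inputFeatures \to \modelFeatures)$.

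It remains to show that the supremum in \cref{def:ulpr}, which ranges only over $\labelS \neq \labelY$, still reaches $\maxLeakage(\inputFeatures \to \modelFeatures)$ --- i.e.\ that excluding the task label from the feasible set costs nothing. This is where \cref{ass:positivePosterior} enters: strict positivity of $\dataDist_{\labelY\mid\inputFeatures}$ together with $|\labelYSpace| > 1$ forces $H(\labelY\mid\inputFeatures) > 0$, so $\labelY$ is not (almost surely) a deterministic function of $\inputFeatures$; hence any deterministic attribute $\eq(\inputFeatures)$ achieving $\gain(\eq(\inputFeatures);\modelFeatures)$ close to $\maxLeakage(\inputFeatures \to \modelFeatures)$ automatically satisfies $\eq(\inputFeatures) \neq \labelY$, while $\eq(\inputFeatures) - \inputFeatures - \modelFeatures$ holds trivially, so $\eq(\inputFeatures)$ is admissible in \cref{def:ulpr}. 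Using the strict inequality $\utilInfo(\labelY;\modelFeatures) > \gamma$, we may take the slack small enough that $\gain(\eq(\inputFeatures);\modelFeatures) > \gamma$, contradicting the LPP bound. (Even without the positivity assumption this degenerate case is absorbed: replacing a maximiser $U$ that equals $\labelY$ by $\pi(\labelY)$ for a fixed-point-free permutation $\pi$ of $\labelYSpace$ leaves the Bayes accuracy and the majority-class baseline --- hence $\gain(\cdot;\modelFeatures)$ --- unchanged, and $\pi(\labelY) \neq \labelY$ everywhere.)

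I expect the crux to be precisely this last step: cleanly arguing that removing $\labelY$ from the supremum in \cref{def:ulpr} does not decrease it, and more generally nailing down the variational characterisation of maximal leakage (its value, and its attainment, or near-attainment, by deterministic functions of $\inputFeatures$) so that the ``Markov chain'' and ``$\neq \labelY$'' side conditions are met simultaneously. The remaining ingredients --- the data-processing step, the Jensen bound $I(\inputFeatures;\modelFeatures)\le\maxLeakage(\inputFeatures\to\modelFeatures)$, and invariance of $\gain$ under bijective relabelling --- are routine; the only real function of \cref{ass:positivePosterior} here is to rule out the degenerate situation in which $\labelY$ itself is the unique leakage-maximising attribute.
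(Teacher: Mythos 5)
Your overall architecture matches the paper's: funnel both conditions through $\maxLeakage(\inputFeatures \to \modelFeatures)$, bound $\utilInfo(\labelY;\modelFeatures)$ by it (for $\alpha=\infty$ because $\labelY$ is itself admissible in the unrestricted supremum, for $\alpha=1$ via data processing and $I(\inputFeatures;\modelFeatures)\le\maxLeakage(\inputFeatures\to\modelFeatures)$), and then argue that excluding $\labelY$ from the supremum in \cref{def:ulpr} costs nothing. The utility-bound half is fine. The genuine gap is the variational claim on which your crux step rests: maximal leakage is \emph{not} approached arbitrarily closely by deterministic functions of $\inputFeatures$. Counterexample: let $\inputFeatures\in\{0,1\}$ with $\Pr[\inputFeatures=1]=0.1$ and $\modelFeatures=\inputFeatures$; then $\maxLeakage(\inputFeatures\to\modelFeatures)=\log 2=1$ bit, but every deterministic $g(\inputFeatures)$ satisfies $\gain(g(\inputFeatures);\modelFeatures)\le\log\frac{1}{\max_u\Pr[g(\inputFeatures)=u]}\le\log\frac{1}{0.9}\approx 0.15$ bits. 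The supremum is attained only by genuinely randomized attributes --- the ``shattering''/maximally revealing attribute $\labelS^*$ of \citet{IssaWK19}, which the paper constructs explicitly (\cref{def:max-revealing-attr}, \cref{th:max_revealing_class}). So the sentence ``any deterministic attribute achieving gain close to $\maxLeakage(\inputFeatures\to\modelFeatures)$ is $\neq\labelY$ because $H(\labelY\mid\inputFeatures)>0$'' quantifies over an empty (or far-from-optimal) set of attributes, and your main argument for $\sup_{\labelS\neq\labelY}\gain(\labelS;\modelFeatures)=\maxLeakage(\inputFeatures\to\modelFeatures)$ does not go through.

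Ironically, your parenthetical remark is the sound way to close this gap and should be the main argument: for any admissible $U$ with $U-\inputFeatures-\modelFeatures$, either $U\neq\labelY$ already, or $U=\labelY$ and you replace it by $\pi(\labelY)$ for a fixed-point-free permutation $\pi$ of $\labelYSpace$ (which exists since $|\labelYSpace|>1$); this preserves both the Bayes accuracy and the majority baseline, hence $\gain$, while guaranteeing $\pi(\labelY)\neq\labelY$. Applied to the shattering attribute (or to any near-maximising sequence, so attainment is not even needed), it shows the restricted supremum equals $\maxLeakage(\inputFeatures\to\modelFeatures)$ without \cref{ass:positivePosterior} at all --- consistent with the paper's remark that \cref{th:ulpl_impossibility} only needs $\labelY\neq\labelS^*$. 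The paper instead keeps the positivity assumption and concludes $\labelS^*\neq\labelY$ directly, because $\dataDist_{\labelS^*\mid\inputFeatures}$ has zeros while $\dataDist_{\labelY\mid\inputFeatures}$ is strictly positive. As written, however, your proof leans on a false fact for its central step, so it is not yet a valid proof.
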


We provide a proof of this and all of the following formal statements in \cref{sec:proofs}.
This result implies that whenever a representation has a certain utility for the learning task with $\utilInfo(\labelY, \modelFeatures) > \gamma$, there exists a sensitive attribute for which an adversary's inference gain is at least as large with $\gain(\labelS, \modelFeatures) \geq \gamma$.
In fact, under \cref{def:ulpr}, it is easy to construct this attribute to be infinitesimally close to the task label $\labelY$ but not quite match it. In the next section, we provide an alternative formalisation
of the LPP that captures the requirement $\labelS \neq \labelY$ yet precludes these cases.

\subsection{Formalisation of the Least-Privilege Principle}
In the previous section, we show that the unconditional LPP, i.e., hiding \emph{all} information about $\inputFeatures$, comes with a stringent trade-off. But this is an unnecessarily restrictive goal.
To use a service, users must be willing to reveal to the service provider at least the intended result of the computation, i.e., the task label $\labelY$.
As a consequence, they cannot conceal from the service provider \emph{any information that can be inferred from $\labelY$ itself}.
This information hence defines the \emph{least privilege} that can be given to the service provider, i.e., the minimum access to data attributes that must be granted to carry out a task.
We call this information the \emph{fundamental leakage} of the task. For a given attribute $\labelS$, the fundamental leakage equals $\Pr[\labelS = \smash{\hat \labelS(\labelY)}]$.
We propose a formalisation of the LPP that only demands that sharing a record's feature representation
$\modelFeatures = \model_E(\inputFeatures)$ does not reveal \emph{more information} about a sensitive
attribute $\labelS$ than publishing $\labelY$ itself:
\begin{definition}[LPP]
\label{def:lpr}
    Given a data distribution $\dataDist_{\inputFeatures, \labelY}$, a feature map $
    \model_E(\inputFeatures) = \modelFeatures$ satisfies the LPP with parameter $\gamma$ if
    for any attribute $\labelS$ which follows the Markov chain $\labelS - (\inputFeatures, \labelY) - \modelFeatures$,
    the attribute inference gain from observing $(\modelFeatures, \labelY)$ over the
    fundamental leakage is bounded:
    \begin{align}
        \frac{\Pr[\labelS = \hat \labelS(\modelFeatures, \labelY)]}{\Pr[\labelS = \hat
        \labelS(\labelY)]} \leq 2^\gamma
    \end{align}
    Equivalently:
    \vspace{-.5em}
    \begin{align}
        \sup_{\labelS:~\labelS - (\inputFeatures, \labelY) - \modelFeatures} \gain(\labelS;
        \modelFeatures \mid \labelY) \leq \gamma
    \end{align}
\end{definition}
Notably, the quantity constrained by the LPP is known as \emph{maximal leakage}~\citep{IssaWK19}:
\begin{align}
    \label{eq:max_leakage}
    \maxLeakage(\inputFeatures \rightarrow \modelFeatures \mid \labelY) \define
    \sup_{\labelS:~\labelS - (\inputFeatures, \labelY) - \modelFeatures} \gain(\labelS; \modelFeatures \mid \labelY).
\end{align}
In comparison to the unconditional variant, this formalisation does not require $\labelS \neq \labelY$. Therefore, it does not restrict the adversary's absolute gain from observing $\modelFeatures$, but only restricts the leakage about sensitive attribute $\labelS$ \emph{to its fundamental limit}, i.e., the leakage caused by the learning task label itself.

\para{Interpretation} A feature map that satisfies the LPP in \cref{def:lpr} with a value of $\gamma \approx 0$ restricts the information available to the service provider to what is necessary for the intended purpose of the system: produce an accurate prediction of the task label $\labelY$. Hence, this formalisation supports the data protection principle of \emph{purpose limitation} which states that ``data should only be collected for specified, explicit, and legitimate purposes and not further processed in a manner that is incompatible with those purposes''~\citep{gdpr}.

We must stress that, despite minimizing the information available to the service provider, this definition does not imply that a feature map that satisfies the LPP will necessarily prevent all harmful inferences.
The LPP only limits the risk of revealing $\modelFeatures = \model_E(\inputFeatures)$ to the fundamental risk already incurred by revealing $\labelY$ itself. In \cref{subsec:emp_fundamental}, we empirically show that even the fundamental leakage 
can lead to inferences that might violate users' expectations about the information they reveal through the use of a service---a violation of contextual integrity~\citep{Nissenbaum04}---and lead to harms~\citep{CitronS22}.

\para{Perfect LPP}
We first analyse the feasibility of \emph{perfect LPP} with $\gamma = 0$. Perfect LPP implies that the representation $\modelFeatures$ leaks strictly no more information about the data $\inputFeatures$ than already revealed by the task label $\labelY$ itself. As we show next, perfect LPP is possible only under a restrictive assumption on the data distribution.
\begin{proposition}\label{stmt:perfect-lpp}
    There exists a feature map $\model_E(\inputFeatures)$ that fulfils the LPP with $\gamma = 0$ if and only if we have a Markov chain $\inputFeatures - \labelY - \modelFeatures$.
\end{proposition}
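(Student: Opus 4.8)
The plan is to read the statement as a characterisation of \emph{perfect LPP}: a feature map with output $\modelFeatures = \model_E(\inputFeatures)$ satisfies the LPP with $\gamma = 0$ if and only if $\inputFeatures - \labelY - \modelFeatures$ is a Markov chain (the existential phrasing follows, since such maps always exist, e.g.\ the constant map, so the content is the characterisation). By \cref{def:lpr} and \cref{eq:max_leakage}, perfect LPP is exactly the condition $\maxLeakage(\inputFeatures\rightarrow\modelFeatures\mid\labelY)=0$, i.e.\ $\gain(\labelS;\modelFeatures\mid\labelY)=0$ for every $\labelS$ with $\labelS-(\inputFeatures,\labelY)-\modelFeatures$ (each such gain is $\ge 0$ anyway, as a Bayes-optimal predictor given $(\modelFeatures,\labelY)$ cannot do worse than one given $\labelY$ alone). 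I would also use throughout that $\modelFeatures$, being a possibly randomised function of $\inputFeatures$, satisfies $\modelFeatures - \inputFeatures - \labelY$, hence $\dataDist_{\modelFeatures\mid\inputFeatures,\labelY}=\dataDist_{\modelFeatures\mid\inputFeatures}$ on the support of $(\inputFeatures,\labelY)$.

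For the ``if'' direction I would assume $\inputFeatures-\labelY-\modelFeatures$ and fix any $\labelS$ with $\labelS-(\inputFeatures,\labelY)-\modelFeatures$. Factoring $\dataDist(\labelSVal,\modelFeaturesVal\mid\inputFeaturesVal,\labelYVal)=\dataDist(\labelSVal\mid\inputFeaturesVal,\labelYVal)\dataDist(\modelFeaturesVal\mid\inputFeaturesVal,\labelYVal)=\dataDist(\labelSVal\mid\inputFeaturesVal,\labelYVal)\dataDist(\modelFeaturesVal\mid\labelYVal)$ and summing over $\inputFeaturesVal$ gives $\dataDist(\labelSVal,\modelFeaturesVal\mid\labelYVal)=\dataDist(\labelSVal\mid\labelYVal)\dataDist(\modelFeaturesVal\mid\labelYVal)$, i.e.\ $\labelS-\labelY-\modelFeatures$. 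Hence the posterior of $\labelS$ given $(\modelFeatures,\labelY)$ equals the posterior given $\labelY$, the two Bayes-optimal predictors agree as decisions, $\gain(\labelS;\modelFeatures\mid\labelY)=0$, and perfect LPP holds.

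For the ``only if'' direction I would argue by contraposition. If $\inputFeatures-\labelY-\modelFeatures$ fails, then (using $\dataDist_{\modelFeatures\mid\inputFeatures,\labelY}=\dataDist_{\modelFeatures\mid\inputFeatures}$) there exist $\labelYVal_0$ with $\dataDist(\labelYVal_0)>0$ and $\inputFeaturesVal_1\ne\inputFeaturesVal_2$ with $\dataDist(\inputFeaturesVal_i\mid\labelYVal_0)>0$ and $\dataDist_{\modelFeatures\mid\inputFeatures=\inputFeaturesVal_1}\ne\dataDist_{\modelFeatures\mid\inputFeatures=\inputFeaturesVal_2}$; equivalently, $\modelFeatures$ is not independent of $\inputFeatures$ under the fibre law $\dataDist_{\inputFeatures,\modelFeatures\mid\labelY=\labelYVal_0}$. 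I would then invoke the operational characterisation of maximal leakage~\citep{IssaWK19}: for any $U,V$, $\sup_{\labelS':\,\labelS'-U-V}\gain(\labelS';V)=\log\sum_v\max_{u\in\supp(U)}\dataDist(v\mid u)$, which equals $0$ iff $\dataDist(v\mid u)$ does not depend on $u$ on the support. Applied to the fibre law this gives a channel $\dataDist_{\labelS'\mid\inputFeatures}$ with $\gain(\labelS';\modelFeatures)>0$ under the fibre joint $\dataDist_{\inputFeatures\mid\labelY=\labelYVal_0}\dataDist_{\labelS'\mid\inputFeatures}\dataDist_{\modelFeatures\mid\inputFeatures}$. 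I would lift it to a global attribute using the same channel in every fibre, $\dataDist_{\labelS\mid\inputFeatures,\labelY}(\cdot\mid\inputFeaturesVal,\labelYVal):=\dataDist_{\labelS'\mid\inputFeatures}(\cdot\mid\inputFeaturesVal)$, so $\labelS-\inputFeatures-(\labelY,\modelFeatures)$ and a fortiori $\labelS-(\inputFeatures,\labelY)-\modelFeatures$. Writing $\Pr[\labelS=\hat\labelS(\modelFeatures,\labelY)]=\sum_{\labelYVal}\dataDist(\labelYVal)\sum_{\modelFeaturesVal}\max_{\labelSVal}\dataDist(\labelSVal,\modelFeaturesVal\mid\labelYVal)$ and $\Pr[\labelS=\hat\labelS(\labelY)]=\sum_{\labelYVal}\dataDist(\labelYVal)\max_{\labelSVal}\dataDist(\labelSVal\mid\labelYVal)$, each fibre summand of the former is $\ge$ the matching summand of the latter, and for $\labelYVal_0$ it is strictly larger by construction; since $\dataDist(\labelYVal_0)>0$ this yields $\gain(\labelS;\modelFeatures\mid\labelY)>0$, contradicting perfect LPP. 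Finally I would observe that, together with ``$\modelFeatures$ is a function of $\inputFeatures$'', the chain $\inputFeatures-\labelY-\modelFeatures$ forces a deterministic $\model_E$ to be constant on each $\labelY$-fibre, which is why perfect LPP is so restrictive.

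The hard part is this ``only if'' direction, specifically the choice of witness attribute $\labelS$. Naive candidates---$\labelS=\mathbf{1}[\inputFeatures=\inputFeaturesVal_1]$, an i.i.d.\ copy of $\modelFeatures$ given $\inputFeatures$, or $\labelS=\inputFeatures$ itself---do not work in general: observing $\modelFeatures$ can perturb the posterior over such an $\labelS$ without ever changing the Bayes-optimal \emph{decision} (e.g.\ when one input value dominates the fibre $\labelYVal_0$), so the gain can remain $0$ even though the chain fails. What rescues the argument is borrowing the extremal witness behind maximal leakage~\citep{IssaWK19}, over a sufficiently rich $\labelSSpace$, which is precisely built to convert any residual conditional dependence into a strictly positive guessing advantage; hence the reliance on the operational characterisation rather than a bare-hands $\labelS$. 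A minor bookkeeping point throughout is that only fibres $\labelYVal$ with $\dataDist(\labelYVal)>0$ are constrained, so all statements above are read on the support of $\labelY$.
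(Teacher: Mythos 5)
Your proof is correct, but it takes a genuinely different route from the paper's. The paper's own argument is two lines: LPP with $\gamma=0$ is the statement $\maxLeakage(\inputFeatures \rightarrow \modelFeatures \mid \labelY)=0$; since maximal leakage dominates (conditional) mutual information and itself vanishes under conditional independence (via the closed form in \cref{th:max_leakage_props}), this is equivalent to $I(\inputFeatures;\modelFeatures\mid\labelY)=0$, which characterises the Markov chain $\inputFeatures-\labelY-\modelFeatures$. You instead prove the equivalence operationally, without passing through mutual information: the ``if'' direction by an elementary factorisation showing that every admissible $\labelS$ (i.e.\ with $\labelS-(\inputFeatures,\labelY)-\modelFeatures$) then satisfies $\labelS-\labelY-\modelFeatures$ and hence has zero gain; the ``only if'' direction by contraposition, extracting a fibre $\labelYVal_0$ on which the channel $x\mapsto \dataDist_{\modelFeatures\mid\inputFeatures=x}$ is non-constant, invoking the achievability (shattering) side of maximal leakage from \citet{IssaWK19} on that fibre, lifting the witness channel $\dataDist_{\labelS\mid\inputFeatures}$ globally, and comparing Bayes accuracies fibre by fibre. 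In effect you re-derive ``conditional maximal leakage $=0$ iff conditional independence'' from scratch, and your observation that naive witnesses such as $\mathbf{1}[\inputFeatures=\inputFeaturesVal_1]$ can shift the posterior without shifting the Bayes decision is a real insight the paper's shortcut hides; what the paper's route buys is brevity, needing only the bound $\maxLeakage \geq \utilInfo$ and the standard fact that zero conditional mutual information is equivalent to the Markov chain, whereas yours buys an explicit constructive witness and independence from the mutual-information machinery.
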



One class of data distributions that fulfils this condition and where perfect LPP is feasible is any distribution in which the intended task label $\labelY$ is a deterministic function of the input data $\inputFeatures$ (see, e.g., \citet{Fisher20}):
\begin{corollary}\label{stmt:det-labels}
    If $\labelY = g(\inputFeatures)$ for some function $g: \inputSpace \rightarrow \labelYSpace$, there exists $f_E(X)$ which achieves perfect LPP with $\gamma = 0$.
\end{corollary}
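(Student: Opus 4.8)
The plan is to obtain the corollary as an immediate specialisation of \cref{stmt:perfect-lpp}, which already reduces the existence of a $\gamma = 0$ feature map to the existence of a representation $\modelFeatures$ satisfying the Markov chain $\inputFeatures - \labelY - \modelFeatures$. Since $\labelY = \eq(\inputFeatures)$ is by hypothesis a deterministic function of the input, the natural witness is the deterministic feature map $\model_E \define \eq$, so that $\modelFeatures = \eq(\inputFeatures) = \labelY$ almost surely. The whole argument then amounts to checking that this $\model_E$ is admissible and induces the required Markov structure.

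First I would confirm that $\model_E = \eq$ is a legitimate feature map in the sense of the problem setup: the representation space $\modelFeaturesSpace$ may be taken to be $\labelYSpace$ (or any superset of its image), and the setup explicitly allows deterministic mappings as the degenerate case of a ``possibly randomized'' one, so $\model_E = \eq$ is allowed. Next I would verify the Markov condition $\inputFeatures - \labelY - \modelFeatures$, i.e.\ $\modelFeatures \independent \inputFeatures \mid \labelY$: since $\modelFeatures = \labelY$ with probability one, conditioning on $\labelY$ fixes $\modelFeatures$ entirely, so the conditional independence holds trivially. Invoking \cref{stmt:perfect-lpp} then gives that $\model_E = \eq$ satisfies the LPP with $\gamma = 0$, which is precisely the claim.

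As a sanity check, one can alternatively verify \cref{def:lpr} directly for this map: for any $\labelS$ with $\labelS - (\inputFeatures, \labelY) - \modelFeatures$, the pair $(\modelFeatures, \labelY) = (\labelY, \labelY)$ carries exactly the information contained in $\labelY$, so the Bayes-optimal predictor $\hat\labelS(\modelFeatures, \labelY)$ coincides with $\hat\labelS(\labelY)$ and $\Pr[\labelS = \hat\labelS(\modelFeatures, \labelY)] = \Pr[\labelS = \hat\labelS(\labelY)]$, yielding a gain ratio of exactly $1 = 2^0$ and hence $\gain(\labelS; \modelFeatures \mid \labelY) = 0$ for every such $\labelS$.

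There is no substantive obstacle in this proof; it is essentially a one-line consequence of \cref{stmt:perfect-lpp}. The only points that deserve explicit mention are (i) that literally setting the representation equal to the label is permitted by the formalisation (range and determinism of $\model_E$), and (ii) that this construction sits outside the scope of the trade-off results: when $\labelY$ is a deterministic function of $\inputFeatures$, \cref{ass:positivePosterior} fails, which is exactly why the impossibility of perfect LPP need not apply here.
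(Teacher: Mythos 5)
Your proof is correct and follows essentially the same route as the paper: both exhibit a representation that is a function of $\labelY$ alone (the paper takes $\modelFeatures = f(\labelY) = f(g(\inputFeatures))$ for an arbitrary, possibly randomized $f$; you take the special case $f = \mathrm{id}$, i.e.\ $\modelFeatures = \labelY$), verify the Markov chain $\inputFeatures - \labelY - \modelFeatures$, and conclude perfect LPP via the characterisation in \cref{stmt:perfect-lpp}. No gaps; your added remarks on admissibility of the map and on \cref{ass:positivePosterior} failing in this regime are accurate but not needed.
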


As we discuss in \cref{sec:background}, in many realistic scenarios, however, there is inherent uncertainty about an example's true label due to, for instance, label noise. Under \cref{ass:positivePosterior}, perfect LPP is unattainable by non-trivial feature maps:
\begin{corollary}\label{stmt:positive-posterior-perfect-lpp}
    Under \cref{ass:positivePosterior}, the feature map $\model_E(\inputFeatures)$ satisfies LPP with $\gamma = 0$ if and only if it is fully random or constant: $\model_E(X) \independent X$.
\end{corollary}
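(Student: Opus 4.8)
The plan is to combine \cref{stmt:perfect-lpp} with a ``double Markov chain'' argument. By \cref{stmt:perfect-lpp}, the map $\model_E$ achieves LPP with $\gamma = 0$ exactly when $\inputFeatures - \labelY - \modelFeatures$ is a Markov chain. Separately, since $\modelFeatures = \model_E(\inputFeatures)$ is produced from $\inputFeatures$ using randomness that is independent of $(\inputFeatures, \labelY, \labelS)$, the chain $\labelY - \inputFeatures - \modelFeatures$ always holds. So the statement reduces to: under \cref{ass:positivePosterior}, the simultaneous validity of $\inputFeatures - \labelY - \modelFeatures$ and $\labelY - \inputFeatures - \modelFeatures$ is equivalent to $\modelFeatures \independent \inputFeatures$.

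For the ``if'' direction, assume $\modelFeatures \independent \inputFeatures$. Combining this with $\labelY - \inputFeatures - \modelFeatures$ gives $\dataDist_{\modelFeatures \mid \inputFeatures, \labelY} = \dataDist_{\modelFeatures \mid \inputFeatures} = \dataDist_{\modelFeatures}$, i.e. $\modelFeatures \independent (\inputFeatures, \labelY)$; in particular $\inputFeatures - \labelY - \modelFeatures$ is a Markov chain, and \cref{stmt:perfect-lpp} yields perfect LPP. (Equivalently, one checks directly that $\modelFeatures \independent (\labelS, \labelY)$ for every $\labelS$ with $\labelS - (\inputFeatures, \labelY) - \modelFeatures$, so $\Pr[\labelS = \hat\labelS(\modelFeatures, \labelY)] = \Pr[\labelS = \hat\labelS(\labelY)]$ and the gain is $0$.)

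For the ``only if'' direction, which is the substantive part, assume $\model_E$ satisfies LPP with $\gamma = 0$, so both Markov chains hold. Equating the two conditional laws of $\modelFeatures$, for every $(\inputFeaturesVal, \labelYVal)$ in the support of $\dataDist_{\inputFeatures, \labelY}$ and every $\modelFeaturesVal$ we get
\[
    \dataDist_{\modelFeatures \mid \inputFeatures}(\modelFeaturesVal \mid \inputFeaturesVal)
    = \dataDist_{\modelFeatures \mid \inputFeatures, \labelY}(\modelFeaturesVal \mid \inputFeaturesVal, \labelYVal)
    = \dataDist_{\modelFeatures \mid \labelY}(\modelFeaturesVal \mid \labelYVal).
\]
This is where \cref{ass:positivePosterior} is used: strict positivity of $\dataDist_{\labelY \mid \inputFeatures}$ together with full support of $\dataDist_{\inputFeatures}$ gives $\supp \dataDist_{\inputFeatures, \labelY} = \inputSpace \times \labelYSpace$, so the identity above holds for \emph{all} $(\inputFeaturesVal, \labelYVal)$. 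Fixing $\modelFeaturesVal$ and a value of $\labelYVal$ while varying $\inputFeaturesVal$ shows $\dataDist_{\modelFeatures \mid \inputFeatures}(\modelFeaturesVal \mid \cdot)$ is constant on $\inputSpace$, hence equal to $\dataDist_{\modelFeatures}(\modelFeaturesVal)$; thus $\modelFeatures \independent \inputFeatures$.

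The main obstacle I anticipate is making the support step airtight --- confirming that the ``full support'' hypothesis on the data domain really delivers $\dataDist_{\inputFeatures}(\inputFeaturesVal) > 0$ for every $\inputFeaturesVal$, so that with \cref{ass:positivePosterior} every cell of $\inputSpace \times \labelYSpace$ carries positive mass --- and, relatedly, being careful that the chain $\labelY - \inputFeatures - \modelFeatures$ is genuinely unconditional (the external randomness in $\model_E$ must be independent of $\labelY$ and of any $\labelS$, not merely of $\inputFeatures$), since the entire reduction relies on having both Markov chains available at once.
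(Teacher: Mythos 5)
Your proposal is correct and takes essentially the same route as the paper: the paper's one-line argument is precisely that the corollary follows by combining the conditional independence $\inputFeatures - \labelY - \modelFeatures$ from \cref{stmt:perfect-lpp} with the Markov chain $\labelY - \inputFeatures - \modelFeatures$ that holds by construction, which is exactly the "double Markov chain" reduction you carry out. Your filled-in details are sound, including the observation that \cref{ass:positivePosterior} together with full support of $\dataDist_{\inputFeatures}$ makes every pair $(\inputFeaturesVal, \labelYVal)$ carry positive mass, so equating $\dataDist_{\modelFeatures \mid \inputFeatures}$ and $\dataDist_{\modelFeatures \mid \labelY}$ across all of $\inputSpace \times \labelYSpace$ forces $\modelFeatures \independent \inputFeatures$.
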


\para{LPP and Utility Trade-Off} We now study whether we can find a feature map that achieves the LPP in \cref{def:lpr} with $\gamma > 0$, and simultaneously has good utility for the intended learning task.
We show the general trade-off between a representation's utility and the LPP requirement:
\begin{theorem}[LPP and Utility Trade-Off]
\label{th:lpl_impossibility}
\normalfont
Suppose that $\dataDist_{\labelY \mid \inputFeatures}$ is strictly positive
(\cref{ass:positivePosterior}).
Then, for $\alpha \in \{1, \infty\}$, the following two properties cannot hold at the same time:
\begin{enumerate}
    \item $\modelFeatures = \model_E(\inputFeatures)$ satisfies the LPP with parameter $\gamma$
    \item $\utilInfo(\labelY; \modelFeatures) > \gamma$
\end{enumerate}
See \cref{sec:proofs} for a full proof and \cref{fig:tradeoff} for an illustration.
\end{theorem}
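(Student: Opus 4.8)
The plan is to collapse both halves of the statement into a single inequality. By \cref{def:lpr}, ``$\modelFeatures = \model_E(\inputFeatures)$ satisfies the LPP with parameter $\gamma$'' is precisely the assertion that the conditional maximal leakage $\maxLeakage(\inputFeatures \rightarrow \modelFeatures \mid \labelY)$ of \cref{eq:max_leakage} is at most $\gamma$. So it is enough to prove that, under \cref{ass:positivePosterior}, for $\alpha \in \{1, \infty\}$
\[
  \utilInfo(\labelY; \modelFeatures) \;\le\; \maxLeakage(\inputFeatures \rightarrow \modelFeatures \mid \labelY);
\]
property~(1) then forces $\utilInfo(\labelY;\modelFeatures) \le \gamma$, which negates property~(2). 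I would obtain this through the chain
\[
  \utilInfo(\labelY; \modelFeatures) \;\le\; \maxLeakage(\labelY \rightarrow \modelFeatures) \;\le\; \maxLeakage(\inputFeatures \rightarrow \modelFeatures) \;\le\; \maxLeakage(\inputFeatures \rightarrow \modelFeatures \mid \labelY),
\]
where $\maxLeakage(\cdot \rightarrow \cdot) = \sup \gain(\labelS; \cdot)$ over attributes with the obvious Markov structure denotes \emph{unconditional} maximal leakage.

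The rightmost inequality is the heart of the argument and the only place \cref{ass:positivePosterior} is used. Since $\modelFeatures = \model_E(\inputFeatures)$ is a (possibly randomised) function of $\inputFeatures$ alone, $\dataDist_{\modelFeatures \mid \inputFeatures, \labelY} = \dataDist_{\modelFeatures \mid \inputFeatures}$. Fix any $\labelYVal \in \labelYSpace$: by \cref{ass:positivePosterior} the conditional law $\dataDist_{\inputFeatures \mid \labelY = \labelYVal}$ still has full support on $\inputSpace$, so the unconditional maximal leakage of $\modelFeatures$ from $\inputFeatures$ evaluated under $\dataDist_{\cdot \mid \labelY = \labelYVal}$ equals $\log \sum_{\modelFeaturesVal} \max_{\inputFeaturesVal \in \inputSpace} \dataDist_{\modelFeatures \mid \inputFeatures}(\modelFeaturesVal \mid \inputFeaturesVal) = \maxLeakage(\inputFeatures \rightarrow \modelFeatures)$, independently of $\labelYVal$. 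Now take an attribute $U$ nearly optimal for this $\labelYVal$-conditional problem ($U - \inputFeatures - \modelFeatures$), and build an attribute $\labelS$ that follows $U$'s conditional rule when $\labelY = \labelYVal$ and is otherwise an independent uniform draw from a large auxiliary alphabet $\{1,\dots,M\}$; then $\labelS - (\inputFeatures, \labelY) - \modelFeatures$, and evaluating the ratio in \cref{eq:adv_gain} shows that $\gain(\labelS; \modelFeatures \mid \labelY) \to \maxLeakage(\inputFeatures \rightarrow \modelFeatures)$ as $M \to \infty$, because the $\labelY \neq \labelYVal$ contributions vanish at rate $O(1/M)$ in both numerator and denominator. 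Taking the supremum in \cref{eq:max_leakage} gives $\maxLeakage(\inputFeatures \rightarrow \modelFeatures \mid \labelY) \ge \maxLeakage(\inputFeatures \rightarrow \modelFeatures)$. This is exactly where the phenomenon behind \cref{stmt:positive-posterior-perfect-lpp} matters: were $\labelY = g(\inputFeatures)$ deterministic, conditioning on $\labelY$ would shrink the support of $\inputFeatures$ and could drive the leakage to zero, matching \cref{stmt:det-labels}.

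The other two inequalities are standard facts about maximal leakage. Because $\modelFeatures$ depends on $\inputFeatures$ only we have $\labelY - \inputFeatures - \modelFeatures$, so the data-processing inequality for maximal leakage gives $\maxLeakage(\labelY \rightarrow \modelFeatures) \le \maxLeakage(\inputFeatures \rightarrow \modelFeatures)$. And $\utilInfo(\labelY; \modelFeatures) \le \maxLeakage(\labelY \rightarrow \modelFeatures)$ for both values of $\alpha$: for $\alpha = 1$ this is the familiar domination $I(\labelY;\modelFeatures) \le \maxLeakage(\labelY \rightarrow \modelFeatures)$, obtained by writing $\maxLeakage(\labelY \rightarrow \modelFeatures) = \log \mathbb{E}[\max_{\labelYVal} \dataDist_{\modelFeatures\mid\labelY}(\modelFeaturesVal\mid\labelYVal) / \dataDist_{\modelFeatures}(\modelFeaturesVal)]$ and applying Jensen twice; for $\alpha = \infty$ it follows termwise from $\max_{\labelYVal}\dataDist_{\modelFeatures\mid\labelY}(\modelFeaturesVal\mid\labelYVal) \ge \max_{\labelYVal}\dataDist_{\labelY,\modelFeatures}(\labelYVal,\modelFeaturesVal) / \max_{\labelYVal}\dataDist_\labelY(\labelYVal)$, which on summing over $\modelFeaturesVal$ yields exactly $\gain(\labelY;\modelFeatures)$ as in \cref{eq:util_gain}. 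Chaining the three bounds proves the inequality of the first paragraph; the case $\gamma = 0$ is consistent with \cref{stmt:positive-posterior-perfect-lpp}, since $\modelFeatures \independent \inputFeatures$ forces $\utilInfo(\labelY;\modelFeatures) = 0$.

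I expect the second paragraph to be the main obstacle: it requires pinning down the closed form of the \emph{conditional} maximal leakage and carefully handling the supremum over attributes, which (unlike $\utilInfo$) is not attained and must be approached through auxiliary randomisation. The remainder is a concatenation of textbook properties of maximal leakage, and the $\alpha = 1$ and $\alpha = \infty$ cases differ only in the final, routine comparison with $\utilInfo$.
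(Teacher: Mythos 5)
Your proposal is correct, and its skeleton is exactly the paper's: identify $\gamma$-LPP with $\maxLeakage(\inputFeatures \rightarrow \modelFeatures \mid \labelY) \leq \gamma$ and chain $\utilInfo(\labelY;\modelFeatures) \leq \maxLeakage(\labelY \rightarrow \modelFeatures) \leq \maxLeakage(\inputFeatures \rightarrow \modelFeatures) \leq \maxLeakage(\inputFeatures \rightarrow \modelFeatures \mid \labelY)$, with \cref{ass:positivePosterior} entering only through the observation that $\supp(\inputFeatures \mid \labelY = \labelYVal) = \supp(\inputFeatures)$. The one genuine difference is how the last link is established. The paper invokes the closed form of \emph{conditional} maximal leakage (\cref{th:max_leakage_props}, from \citet{IssaWK19}) and reads off the equality $\maxLeakage(\inputFeatures \rightarrow \modelFeatures \mid \labelY) = \maxLeakage(\inputFeatures \rightarrow \modelFeatures)$ directly from the coincidence of supports; you instead prove only the inequality you need, operationally, by fixing a $\labelYVal$, taking a near-optimal attribute $U$ for the $\labelYVal$-conditioned unconditional problem, and padding it with an independent uniform draw on a large disjoint auxiliary alphabet for $\labelY \neq \labelYVal$ so that $\gain(\labelS;\modelFeatures\mid\labelY)$ tends to $\maxLeakage(\inputFeatures\rightarrow\modelFeatures)$. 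This is essentially a self-contained re-derivation of the relevant direction of Issa et al.'s closed form (your construction is in the spirit of their shattering distribution, which the paper uses explicitly only for \cref{th:ulpl_impossibility}); it buys independence from the conditional-leakage formula at the cost of the limiting argument and bookkeeping (the auxiliary alphabet should be stated as disjoint from $U$'s, and you implicitly use that the unconditional closed form depends on $\dataDist_\inputFeatures$ only through its support, plus $\dataDist_{\modelFeatures\mid\inputFeatures,\labelY}=\dataDist_{\modelFeatures\mid\inputFeatures}$, both of which you do note). The remaining links ($\utilInfo \leq \maxLeakage$ for $\alpha\in\{1,\infty\}$ and the data-processing inequality along $\labelY - \inputFeatures - \modelFeatures$) are the same cited properties the paper uses, which you additionally sketch proofs of; your termwise argument for $\alpha=\infty$ and Jensen argument for $\alpha=1$ are both sound.
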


One way to see why the trade-off holds is through the lens of attributes that reveal maximum possible information about $\inputFeatures$ from $\modelFeatures$ (see \cref{sec:proofs} for a formal description).
Intuitively, we would expect that by allowing for fundamental leakage, i.e., by conditioning on $\labelY$, we might reduce maximal leakage in those cases where $\labelY$ is a maximally revealing attribute. However, as we show in \cref{sec:proofs}, strict positivity of the posterior distribution (\cref{ass:positivePosterior}) prevents the task label $\labelY$ from being a maximally revealing attribute, thus the least-privilege and utility trade-off remains the same as under unconditional LPP (\cref{th:ulpl_impossibility}).

\paragraph{Related Formalisms.}
Our formalisation of the LPP is closely related to the Minimum Necessary Information (MNI) principle introduced by \citet{Fisher20}. The MNI criterion demands that a representation $\modelFeatures$ should maximise the mutual information $I(\labelY ; \modelFeatures)$ while minimising the conditional information $I(\inputFeatures; \modelFeatures \mid \labelY)$. The LPP and MNI differ in their leakage measure, but are equal in the optimal point of $\gamma = 0$ in which $I(\inputFeatures; \modelFeatures \mid \labelY) = \maxLeakage(\inputFeatures \rightarrow \modelFeatures \mid \labelY) = 0$.

The question of \emph{finding} a feature representation that maximises utility but satisfies the LPP posed by \cref{th:lpl_impossibility} can be seen as a variant of the conditional entropy bottleneck (CEB) problem~\citep{Fisher20}, which, in turn is a variant of the standard information bottleneck problem~\citep{AsoodehC20, TishbyPB00}. The CEB problem aims to recover a representation that satisfies the MNI principle.

Unlike in the MNI and CEB, we require a leakage measure that captures the original claim from prior work that it is possible to learn representations that protect against \emph{any} harmful inferences, i.e., protect even against \emph{worst-case inference} adversaries. Maximal leakage satisfies this requirement and in addition has a direct operational meaning in terms of inference gain (see \cref{eq:adv_gain}).

Other works, such as \citet{MireshghallahTJE21} or \citet{MaengGKS24}, differ not only in their leakage measure but crucially do not consider a task's fundamental leakage, i.e., do not define the trade-off in terms of the \emph{conditional} leakage $I_\alpha(\labelS, \modelFeatures \mid \labelY)$. In addition, \citet{MireshghallahTJE21} and other formalisations of related information-theoretic problems, such as the Privacy Funnel~\citep{MakhdoumiSFM14,AsoodehC20}, only consider a fixed sensitive attribute which does not adequately capture the least-privilege requirement.


\begin{figure}
    \centering
    \resizebox{0.5\textwidth}{!}{
\scalefont{1.5}
\begin{tikzpicture}
\begin{axis}[
    xmin = 0, xmax = 2,
    ymin = 0, ymax = 2,
    xtick distance = 1,
    ytick distance = 1,
    grid = both,
    legend pos = south east,
    legend style = {draw=none},
    legend cell align={left},
    minor tick num = 1,
    major grid style = {lightgray},
    minor grid style = {lightgray!25},
    width = \textwidth,
    height = 0.4\textwidth,
    xlabel = {Attribute inference gain, $\gamma = 
         \sup_{\labelS} \log \frac{\Pr[\labelS = \hat
     \labelS(\modelFeatures,\labelY)]}{\Pr[\labelS = \hat \labelS(\labelY)]}$
    },
    ylabel = {Task utility, $\utilInfo(\labelY; \modelFeatures)$},
    ]

\addplot [pattern=north west lines, pattern color=gray!50, domain=0:2, samples=100] {x} \closedcycle;

\addplot[
    domain = 0:2,
    samples = 100,
    smooth,
    ultra thick,
    black,
    ] {x};
\node[fill=white, fill opacity=0.85] at (1,0.5) {Feasible region};

\end{axis}
\end{tikzpicture}}
\caption{$\gamma$-LPP limits maximum utility $\utilInfo(\labelY;\modelFeatures)$ of a representation $\modelFeatures$ to the greyed-out region.
}
\label{fig:tradeoff}
\end{figure}

\paragraph{LPP vs. LDP.} \citet{MelisSDS19} suggest that LPL might be used to limit unintended information leakage as an alternative approach to record-level differential privacy~\citep{DworkR14} which they find imposes a high utility cost that prevents its application in many realistic MLaaS scenarios. The hope is that LPL might provide a better trade-off between utility and restricting unintended information leakage than differental privacy. Our formalisation of the LPP enables us to formally analyse this claim.

First, let us define local differential privacy (LDP), a variant of differential privacy that is relevant to our MLaaS setting (see \cref{fig:problem_setup}).
\begin{definition}[LDP]
    A feature map $\model_E(\inputFeatures) = \modelFeatures$ satisfies $\varepsilon$-LDP if for any $x, x' \in \inputSpace$, and any $z \in \modelFeaturesSpace$, we have:
    \begin{equation}
        \frac{\Pr[\model_E(x) = z]}{\Pr[\model_E(x') = z]} \leq 2^\varepsilon.
    \end{equation}
\end{definition}
Conventionally, a power of $e$ instead of $2$ is used, but we use the latter for consistency with our previous definitions.

Given this definition, we show that LDP implies LPP:
\begin{proposition}
    \label{th:ldp-lpp}
    A feature map that satisfies $\varepsilon$-LDP also satisfies $\varepsilon$-LPP.
\end{proposition}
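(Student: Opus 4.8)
The plan is to show directly that $\varepsilon$-LDP forces the conditional maximal leakage $\maxLeakage(\inputFeatures \rightarrow \modelFeatures \mid \labelY)$ to be at most $\varepsilon$, which is precisely the $\varepsilon$-LPP condition of \cref{def:lpr}. By the operational form of $\gain(\labelS; \modelFeatures \mid \labelY)$ in \cref{eq:adv_gain}, it is enough to prove that for \emph{every} attribute $\labelS$ satisfying the Markov chain $\labelS - (\inputFeatures, \labelY) - \modelFeatures$ we have $\Pr[\labelS = \hat\labelS(\modelFeatures, \labelY)] \leq 2^\varepsilon \, \Pr[\labelS = \hat\labelS(\labelY)]$, and then to take the supremum over such $\labelS$.

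First I would put the two Bayes-optimal accuracies in closed form. Since the Bayes-optimal predictor of $\labelS$ selects the most likely value for each observation, $\Pr[\labelS = \hat\labelS(\modelFeatures, \labelY)] = \sum_{z, y} \max_{s} \dataDist_{\labelS, \modelFeatures, \labelY}(s, z, y)$ and $\Pr[\labelS = \hat\labelS(\labelY)] = \sum_{y} \max_{s} \dataDist_{\labelS, \labelY}(s, y)$, using that the nonnegative weights $\dataDist_{\modelFeatures, \labelY}(z,y)$ and $\dataDist_{\labelY}(y)$ can be absorbed into the maximum. Next, using that $\modelFeatures = \model_E(\inputFeatures)$ is produced by a (possibly randomized) map whose internal randomness is independent of $(\inputFeatures, \labelY, \labelS)$, together with the Markov chain $\labelS - (\inputFeatures,\labelY) - \modelFeatures$, I would rewrite the joint as $\dataDist_{\labelS, \modelFeatures, \labelY}(s, z, y) = \sum_{x} \dataDist_{\labelS, \inputFeatures, \labelY}(s, x, y)\, \Pr[\model_E(x) = z]$.

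The key step is to fix an arbitrary reference point $x^\star \in \inputSpace$ and apply the $\varepsilon$-LDP inequality $\Pr[\model_E(x) = z] \leq 2^\varepsilon \Pr[\model_E(x^\star) = z]$ uniformly in $x$ inside this sum; outputs $z$ with $\Pr[\model_E(x^\star) = z] = 0$ contribute nothing, since LDP then forces $\Pr[\model_E(x) = z] = 0$ for all $x$. Because $\Pr[\model_E(x^\star) = z]$ does not depend on $s$, it can be pulled outside both the $\max_s$ and the inner sum over $x$, and $\sum_{x} \dataDist_{\labelS, \inputFeatures, \labelY}(s, x, y) = \dataDist_{\labelS, \labelY}(s, y)$, which gives $\max_s \dataDist_{\labelS, \modelFeatures, \labelY}(s, z, y) \leq 2^\varepsilon \Pr[\model_E(x^\star) = z]\, \max_s \dataDist_{\labelS, \labelY}(s, y)$. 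Summing over $z$ (using $\sum_z \Pr[\model_E(x^\star) = z] = 1$) and then over $y$ yields $\Pr[\labelS = \hat\labelS(\modelFeatures, \labelY)] \leq 2^\varepsilon \, \Pr[\labelS = \hat\labelS(\labelY)]$, and taking the supremum over admissible $\labelS$ gives $\maxLeakage(\inputFeatures \rightarrow \modelFeatures \mid \labelY) \leq \varepsilon$.

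This argument is essentially mechanical once the closed forms are in place, so there is no real ``hard part''; the only points that deserve care are (i) the edge case of outputs $z$ with zero probability under $x^\star$, and (ii) checking that the bound is uniform over all attributes $\labelS$, i.e., that the same constant $2^\varepsilon$ works regardless of $\labelS$'s alphabet and conditional law --- which is immediate from the derivation, since the LDP constant never depends on $\labelS$. Alternatively, one could obtain the same conclusion by substituting the $\varepsilon$-LDP bound into the known closed-form expression for (conditional) maximal leakage~\citep{IssaWK19}, but the direct computation above is self-contained.
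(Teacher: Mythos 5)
Your proof is correct, but it takes a different route from the paper. The paper's proof is essentially a two-step citation argument: it invokes the result of \citet{IssaWK19} that an $\varepsilon$-LDP map satisfies $\maxLeakage(\inputFeatures \rightarrow \modelFeatures) \leq \varepsilon$, and then uses the closed-form expression for conditional maximal leakage (\cref{th:max_leakage_props}, via \cref{eq:proof-cond-uncond}) to note that conditioning on $\labelY$ can only shrink the inner maximum over $\inputFeaturesVal$, so $\maxLeakage(\inputFeatures \rightarrow \modelFeatures \mid \labelY) \leq \maxLeakage(\inputFeatures \rightarrow \modelFeatures) \leq \varepsilon$. You instead prove the conditional bound from first principles: you write both Bayes-optimal accuracies as sums of per-observation maxima of joint probabilities, factor the joint through the channel $\Pr[\model_E(x)=z]$ using the Markov chain $\labelS - (\inputFeatures,\labelY) - \modelFeatures$ and $\labelY - \inputFeatures - \modelFeatures$, and apply the LDP ratio bound pointwise against a fixed reference input $x^\star$, which uniformly yields $\Pr[\labelS = \hat\labelS(\modelFeatures,\labelY)] \leq 2^\varepsilon \Pr[\labelS=\hat\labelS(\labelY)]$ for every admissible $\labelS$. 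The steps all check out, including the zero-probability edge case and the uniformity over $\labelS$; the alternative you mention at the end (substituting LDP into the closed form) is exactly the paper's route. What your version buys is self-containedness --- it needs neither the closed-form characterisations of \citet{IssaWK19} nor their LDP-to-maximal-leakage lemma --- and it makes explicit that \cref{th:ldp-lpp} holds with no distributional assumption, whereas the paper's terse pointer to \cref{eq:proof-cond-uncond} (an equality derived under \cref{ass:positivePosterior} in the proof of \cref{th:lpl_impossibility}) could be misread as relying on that assumption, even though only the assumption-free inequality direction is actually used. What the paper's route buys is brevity and reuse of machinery already established for the main trade-off theorem.
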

%
%

An important implication is that \emph{a feature map that satisfies the LPP comes with the same utility trade-off as one that satisfies LDP}. Thus, in the setting of \cref{th:lpl_impossibility}, LPL cannot provide an alternative to differential privacy at a lower utility cost as envisioned by prior work~\citep{OsiaTS18, MelisSDS19}.

\para{Takeaways} In summary, \cref{th:lpl_impossibility} implies that in many realistic applications (see \cref{ass:positivePosterior}) there is a stringent trade-off between a representation's utility and achieving the LPP. Notably, this holds for \emph{any} feature representation \textit{regardless of the learning technique used to obtain the feature mapping or the exact model architecture}.
In the next section, we show the practical implications of this result. 
\section{Empirical Evaluation}
\label{sec:empirical}

In this section, we empirically validate our theoretical results.
We demonstrate that the fundamental trade-off between a representation's utility for its intended task and the LPP applies to \emph{any feature representation regardless of the feature learning technique, model architecture, or dataset}. 
Due to space constraints, in the main body of the paper, we only present results for an image dataset, two different learning techniques, and a single model architecture. We discuss additional results that confirm that our theoretical results hold on a wider range of learning techniques, models, and datasets in \cref{subsec:emp_tradeoff}, with their details deferred to \cref{sec:add_exp}.

\para{Data} In our main experiment, we use the LFWA+ image dataset which has multiple binary attribute labels for each image~\citep{LFWA}. The full dataset contains $13,143$ examples which we split in the following way: $20\%$ of records are given to the adversary as an auxiliary dataset $\dataset_A$. The remaining $10,514$ records are split $80/20\%$ across a train $\dataset_T$ and evaluation set $\dataset_E$.

\para{Model} 
We choose a deep convolutional neural network (\CNN) used by \citet{MelisSDS19}, the first work to propose feature learning under a least-privilege principle. In the main results, we use the network's last layer representation as feature map $\modelFeatures =  \model_E(\inputFeatures)$. We choose the last layer because, as \citet{MelisSDS19} and \citet{MoBMH21} show, higher layer representations are expected to be more learning task specific and hence should restrict leakage of data attributes other than the task more than lower layer representations. In \cref{sec:add_exp}, we include experiments with representations from other hidden layers of the model and for a MLaaS training setting in which users share gradients from all layers of the model.

\para{Adversaries} To evaluate leakage of a given attribute, we instantiate Bayes-optimal adversaries with access to the auxiliary set $\dataset_A$ of labelled examples $\record_i = (\inputFeaturesVal_i, \labelYVal_i, \labelSVal_i)$ for $i=1,\ldots, A$.
The label-only adversary $\smash{\hat{\labelS}(\labelY)}$ computes the relative frequency counts over $\dataset_A$ to estimate:
$$\tilde{\Pr}[\labelS=\labelSVal \mid \labelY=\labelYVal] \define 
\frac{\sum_{i=1}^A \mathbf{1}[\labelSVal_i = \labelSVal, \labelYVal_i = \labelYVal]}
{\sum_{i=1}^A \mathbf{1}[\labelYVal_i = \labelYVal]},$$
and outputs a guess according to:
\[\smash{\hat{\labelS}(\labelY=\labelYVal) = 
\arg \max_\labelSVal \tilde{\Pr}[\labelS=\labelSVal \mid \labelY=\labelYVal]}.\]
The features adversary $\hat{\labelS}(\modelFeatures, \labelY)$ is given black-box access to
the feature mapping of the trained model. To collect a train set, the features adversary submits each example $\inputFeaturesVal_i \in \dataset_A$ to the model and
receives back its representation $\modelFeaturesVal_i =
\model_E(\inputFeaturesVal_i)$.
The adversary then trains a Random Forest classifier with $50$ decision trees on the collected samples $(\modelFeaturesVal_i, \labelYVal_i, \labelSVal_i)$ to estimate $\smash{\tilde{\Pr}[\labelS = \labelSVal \mid \modelFeatures=\modelFeaturesVal, \labelY = \labelYVal]}$. We opt for Random Forest as an attack model based on its superior performance over other classifiers we tested.

\para{Experimental Setup} In each experiment, we select one out of $12$ attributes from the LFWA+ dataset as the model's learning task $\labelY$ and a second attribute as the sensitive attribute $\labelS$ targeted by the adversary.
We select attributes for which we expect the distribution $\dataDist_{\labelY \mid \inputFeatures}$ to be strictly positive due to their subjective nature (see \cref{ass:positivePosterior}).
We repeat each experiment $5$ times to capture randomness of our measurements for both the model and adversary, and show average results across all $5$ repetitions.
At the start of the experiment, we split the data into train $\dataset_T$, evaluation $\dataset_E$, and auxiliary set $\dataset_A$. We train the model \CNN on the train set $\dataset_T$ for the chosen learning task and then estimate its utility on the evaluation set $\dataset_E$. 
We estimate the model's utility as its multiplicative gain in task accuracy $\smash{\gainEst(\labelY; \modelFeatures)}$ (see \cref{eq:util_gain}).
After model training and evaluation, we train both the label-only and features adversary on the auxiliary data $\dataset_A$. For a given sensitive attribute $\labelS$, we estimate the adversary's gain as $\gainEst(\labelS; \modelFeatures \mid \labelY)$ (see \cref{eq:adv_gain}).

%
\para{Learning Techniques} We implement two learning techniques: (1) standard ERM with SGD, and (2) attribute censoring to learn representations that hide a given sensitive attribute.
In our main experiment, we use the gradient reversal strategy (GRAD) introduced by \citet{RaffS18} for censoring with a censoring parameter of $100$. We choose GRAD because it effectively hides the chosen sensitive attribute without large drops in model performance~\citep{ZhaoCTG20}, and, unlike other censoring techniques, can be applied to any model architecture. In \cref{sec:add_exp}, we show equivalent results for another learning technique that aims to hide sensitive information, adversarial representation learning.
%
%

\subsection{The Potential Harms of Fundamental Leakage}
\label{subsec:emp_fundamental}
\begin{figure}
	\includegraphics[width=\columnwidth]{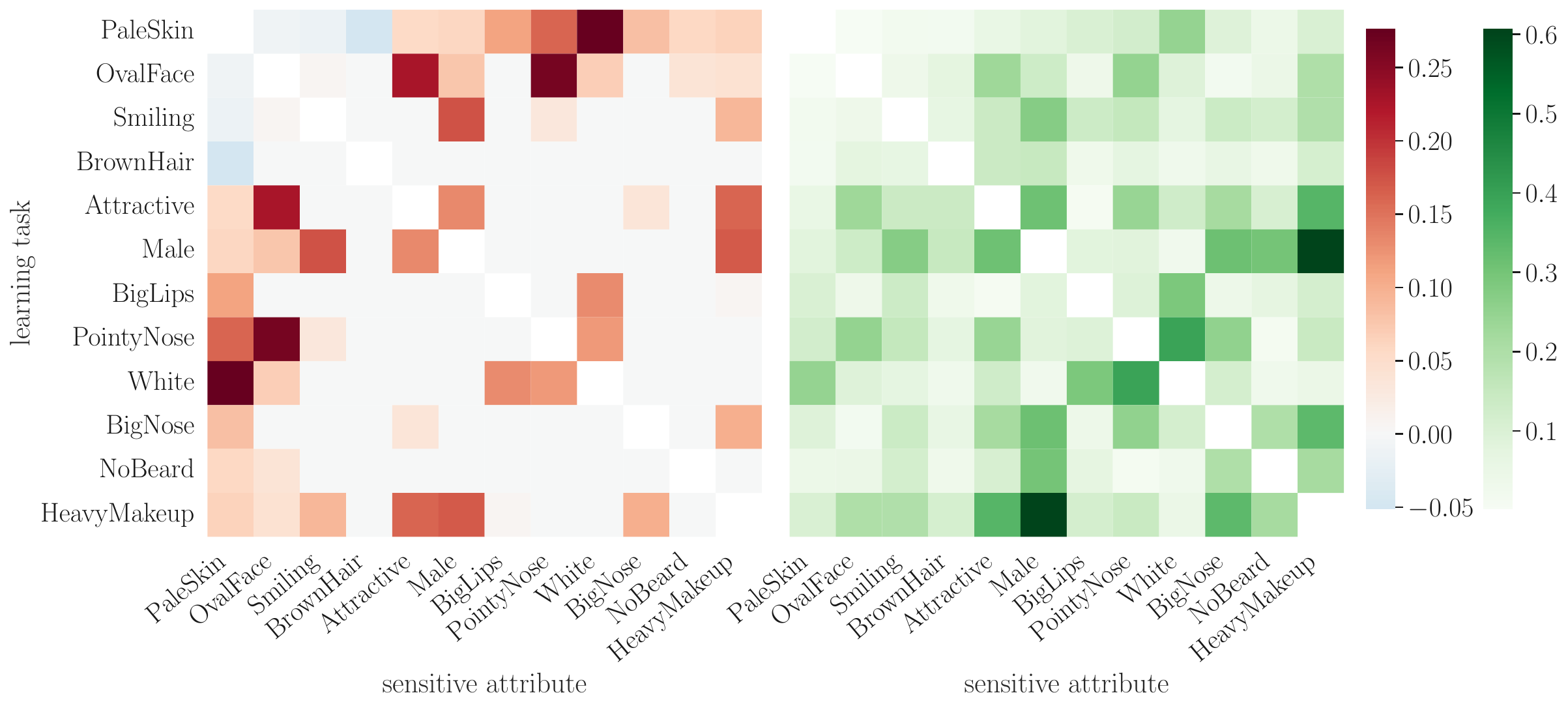}
	\caption{
    \textbf{Fundamental leakage: the task label reveals information about other data attributes, which might not be obvious to data subjects.} 
    Attribute inference gain of the label-only adversary\subfig{left} and pairwise Pearson's correlation between attributes\subfig{right}.
    In the LFWA+ dataset, the `Attractive' label is highly correlated with the perceived gender. Thus, predicting the `Attractive' label will reveal information about gender. 
    }
	\label{fig:corr_leakage}
    \vspace*{-5mm}
\end{figure}

\cref{fig:corr_leakage}\subfig{left} shows the adversary's gain $\smash{\gainEst(\labelS; \labelY)}$ in predicting sensitive attribute $\labelS$ from $\labelY$ for pair-wise combinations of $12$ learning tasks and sensitive attributes. This gain represents the fundamental leakage of a system that fulfils its intended purpose, i.e., that produces accurate task labels for the chosen learning task. The matrix in \cref{fig:corr_leakage}\subfig{right} shows the absolute pairwise Pearson's correlation coefficient between attribute labels.
\cref{fig:corr_leakage} shows that, as expected, a strong linear relationship between the learning task and the sensitive attribute targeted by the adversary leads to a large fundamental leakage. For instance, `Attractive' and `Male' are negatively correlated with a correlation coefficient of $-0.3094$. This is already enough to increase the adversary's gain in inferring sensitive attribute `Male' when the learning task is `Attractive'.
Although the increase is small in this case, it illustrates how inferences due to a task's fundamental leakage can not only be counterintuitive, but also reveal information that could lead to harm (in this case, discrimination due to gender).

The fundamental leakage for some pairs of tasks and sensitive attributes is highly concerning. It implies that users reveal to the service provider not only their task label but also \emph{any attribute that is correlated with the chosen task}. This consequence is rarely made explicit to users when they are informed about the data collection and processing, and might lead to unexpected harms beyond those associated with revealing the task label itself.
In the example above, a user expecting to only reveal `attractiveness' might not expect that their gender is revealed, with the ensuing risks of discrimination.
To better inform users about their risk, providers would have to list all sensitive attributes that might be leaked through a task's fundamental leakage. Knowing \emph{all} such attributes is infeasible. To address this problem, service providers could empirically evaluate whether attributes considered particularly sensitive are part of the fundamental leakage and inform data subjects about the result.
%
\subsection{The Least-Privilege And Utility Trade-Off}
\label{subsec:emp_tradeoff}
One way to interpret \cref{th:lpl_impossibility} is that whenever the features learned by a model are useful for a given prediction task, there always exists a sensitive attribute for which an adversary gains an advantage from observing a target record's feature representation.
We experimentally show this fundamental limit of least-privilege learning.

\begin{figure}[h!]
    \centering
    \includegraphics[width=\columnwidth]{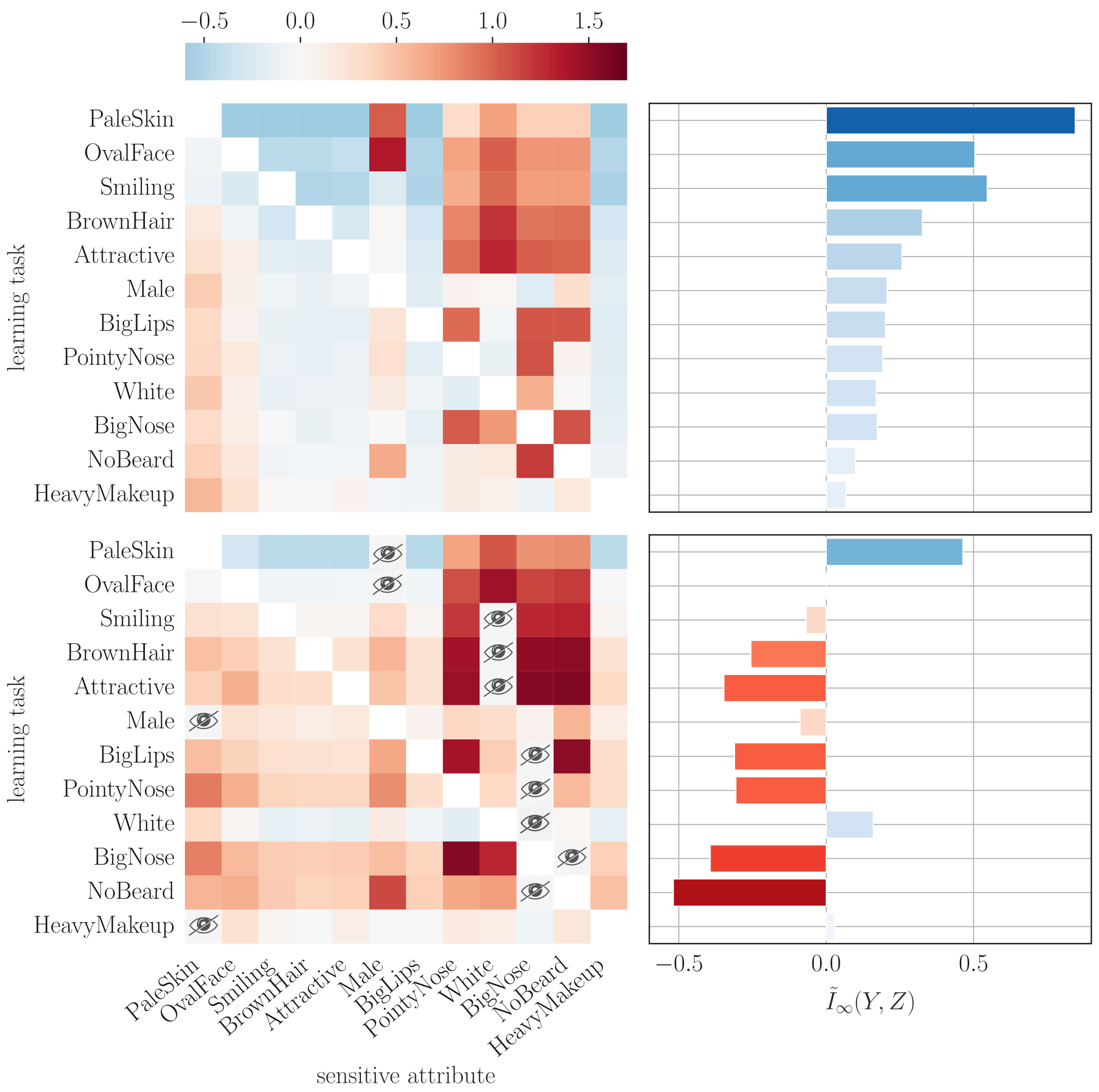}
    \caption{\textbf{If the model-generated representations have utility for the task\subfig{right}, there exists a sensitive attribute with an even higher inference gain for the adversary\subfig{left, \textcolor{red!80!black}{red} means more leakage}.} This holds for both standard ERM\subfig{top} and attribute censoring\subfig{bottom} where we censor the attribute with highest leakage in the respective ERM model (marked as $\textcolor{gray}{\tikzmarknode[strike out,draw]{2}{\text{\footnotesize \faEye}}}$). Censoring has a `whack-a-mole' effect: as we censor one attribute, leakage of another attribute increases.
    }
    \label{fig:put}
\end{figure}

\cref{fig:put} compares the trade-off between utility and attribute leakage of models trained with standard SGD \subfig{top} and with GRAD to censor the representation of a single sensitive attribute \subfig{bottom}. The blue bars in \cref{fig:put}\subfig{right} show the model's utility for learning task $\labelY$ measured as $\gainEst(\labelY; \modelFeatures)$. The heatmaps in \cref{fig:put}\subfig{left} show the difference between the adversary's inference gain and the model's utility:
$$\Delta_\mathsf{Adv} \smash{\define} \gainEst(\labelS; \modelFeatures \mid \labelY) - \gainEst(\labelY; \modelFeatures).$$
Each row corresponds to a learning task $\labelY$, and each column to a sensitive attribute targeted by the adversary.

\para{Across Learning Tasks} In \cref{fig:put}\subfig{top}, we see that in the LFWA+ dataset the features learned by a model that predicts attribute `Smiling' increase the adversary's inference gain for attribute `White'. Different tasks result in a high leakage for different attributes, e.g., `OvalFace' reveals a lot of additional information about gender, while `HeavyMakeup' is indicative of `PaleSkin' and `Attractive' but does not have much influence on the inference power of other attributes. Importantly, for \emph{every} task, there is at least one sensitive attribute  with $\Delta_\mathsf{Adv} > 0$.
These results confirm that the features learned by a model trained to perform well on its learning task reveal more information than a task's fundamental leakage and thus violate the LPP.
\cref{fig:adv_gain} in \cref{sec:add_exp} shows these results in terms of the adversary's absolute inference gain $\gainEst(\labelS; \modelFeatures \mid \labelY)$. It confirms that the adversary's absolute inference gain increases with a model's performance on its intended task.

\para{Across Learning Techniques} In \cref{fig:put}\subfig{bottom}, we show results for models trained under attribute censoring, a common technique used to address unintended information leakage~\citep{SongS19, BrownMM22, ZhaoCTG20}. For each task, we censor the attribute with the highest leakage under standard training. First, as expected, censoring limits the leakage of the censored attribute. However, we observe that the trade-off from \cref{th:lpl_impossibility} holds: In all our experiments, an adversary can find another data attribute with $\Delta_\mathsf{Adv} > 0$ that thus violates the LPP.  

In \cref{fig:maxent} in \cref{sec:add_exp}, we present experiments for another learning technique, adversarial representation learning, that has been shown to be an efficient censoring technique~\citep{ZhaoCTG20}. We find similar results: For two out of the four sensitive attributes we test, the adversary's inference gain exceeds the features' utility $\gainEst(S, Z \mid Y) > \gainEst(\labelY, \modelFeatures)$. These results confirm that the trade-off between a representation's utility and the LPP holds regardless of the learning technique used to obtain $\model_E(\inputFeatures)$.

\para{Across Hidden Layers} We conduct additional experiments that show that the strict trade-off of \cref{th:lpl_impossibility} applies regardless of the model architecture (see \cref{fig:resnet} in \cref{sec:add_exp}) or the hidden layer at which the adversary observes a record's representation (see \cref{fig:layers} in \cref{sec:add_exp}). These results are in line with prior work by \citet{MoBMH21} and \citet{MelisSDS19}. Even though higher layers are expected to be more learning task specific, an adversary can misuse the shared representations at any layer to predict attributes other than the intended task. We show that this leakage even exceeds the fundamental leakage of the task label itself and violates the LPP.

\para{Across Datasets and Model Architectures} Besides images, tabular data is another type of data for which representation sharing is commonly suggested as a technique to limit unintended information leakage~\citep{ZhaoCTG20, ArikP21, CreagerMJ19, RaffS18}. In \cref{fig:maxent,fig:tabnet} in \cref{sec:add_exp}, we show that the trade-off also holds for this type of data (and the associated model architectures) using two tabular datasets: it is not possible to find feature representations that have high utility for a prediction task but fulfil the LPP.

\para{In Collaborative Learning Settings} The gradient updates shared with a ML service provider in the collaborative learning setting are a noisy version of the feature representations learned by the model \citep{MelisSDS19}. In \cref{fig:gradients} in \cref{sec:add_exp}, we show that hence the fundamental trade-off of \cref{th:lpl_impossibility} equally applies to gradient updates shared for model training. Our experiments thus show that LPL cannot, as envisioned by \citet{MelisSDS19}, provide the promised solution to address unintended feature leakage in the collaborative learning setting.

\para{Takeaways} Our experiments confirm \emph{the fundamental trade-off} we derive in \cref{th:lpl_impossibility}: the representations generated by models that perform well on their intended task fail to fulfil the LPP. Censoring techniques can be used to limit the adversary's inference gain on a particular attribute but cannot avoid the trade-off: there always exists an attribute, other than the learning task, that violates the LPP. This `whack-a-mole' effect is a phenomenon observed in related scenarios, such as, privacy-preserving data publishing~\citep{NarayananS19}.
Our experiments are the first to study the trade-off between a representation's utility and unintended information leakage across a large combination of learning tasks and sensitive attributes. In contrast to prior work that evaluates unintended feature leakage for a single learning task and fixed sensitive attribute, this enables us to show that although it is sometimes possible to restrict information leakage for a single attribute, the learned representations do not fulfil the LPP.

\section{Limitations and Future Work}
Our results hold under a mild assumption on the data distribution (\cref{ass:positivePosterior}) and for a specific operational measure of leakage (\cref{subsec:leakage_measure}). By relaxing these two aspects, it might be possible to achieve a more favourable trade-off than what we find.

First, the trade-off could possibly be less stringent for specific data distributions for which \cref{ass:positivePosterior} does not hold. In such cases, it might be possible that for some learning tasks, there exists a representation that provides some utility gain and at the same time constrains the maximal leakage, i.e., that simultaneously fulfils the two conditions stated in \cref{th:lpl_impossibility}. It could be a direction for future work to formally analyse whether such representations are achievable for distributions that do not satisfy \cref{ass:positivePosterior}.

Second, the leakage measure used to formalise the LPP could be relaxed by restricting the set of possible inference targets to a less rich class of distributions. For instance, the class of attributes targeted by the adversary could be limited to those “efficiently identifiable” \citep{HebertKRR18}. However, such modifications would require a different analytical toolbox and deviate from the promise of LPL to leak “nothing else” \citep{BrownMM22} but the intended task. In addition, although the leakage measure considered here might seem strong, it is already significantly weaker than standard privacy notions, such as LDP, as it measures attribute inference success in expectation over the population as opposed to worst-case risks.

\section{Conclusions}
\label{sec:conclusions}

The promise of least-privilege learning --- to learn feature representations that are useful for a given task but avoid leakage of any information that might be misused and cause harm --- is extremely appealing. 
In this paper, we show that in certain realistic settings where there exists inherent uncertainty about the task labels, any representation that provides utility for its intended task must always leak information about attributes other than the task and thus does neither fulfil the least-privilege principle nor provide a better utility-privacy trade-off than existing formal privacy-preserving techniques. 
Furthermore, we show that, due to a task's fundamental leakage, even representations that fulfil the least-privilege principle might not prevent all the potential harms. 
These issues apply to any setting in which users share representations instead of raw data records and regardless of the learning technique used to learn the feature mapping that produce these representations.

\section*{Impact Statement}
This paper validates claims from prior works that the concept of least-privilege learning might provide a promising avenue to mitigate harms of potential misuse of machine learning. Prior works repeatedly suggest that least-privilege learning might achieve a better trade-off between revealing information about an intended task and unintended information leakage than existing techniques, such as differential privacy.
In this paper, we formally prove that least-privilege learning might not fulfil this promise: We show that there is an inherent trade-off between a representation's utility and its unintended information leakage that cannot be overcome by any learning technique.
The results of this paper will help practitioners 
to evaluate the potential benefits and shortcomings of a newly proposed technique. Our formalisation of a previously only informally stated property will enable researchers to formally compare the trade-offs of alternative methods to prevent data misuse in machine learning.

\clearpage

\section*{Acknowledgements}
This work is partially funded by the Swiss National Science Foundation under grant 200021-188824. NP acknowledges support from the Canada CIFAR AI Chair. We would like to thank Christian Knabenhans for providing a basis for the implementation of the empirical evaluation of this work.

\bibliography{least_privilege}
\bibliographystyle{resource/icml2024}

\clearpage
\appendix
\onecolumn
\renewcommand{\figurewidth}{.9\linewidth}

\section{Formal Details}
\label{sec:proofs}

\para{Maximal Leakage}
First, let us provide some useful properties of maximal leakage.

Maximal leakage and conditional maximal leakage are defined~\citep{IssaWK19} as follows:
\begin{align}
    \label{eq:max_leakage_full}
   \maxLeakage(\inputFeatures \rightarrow \modelFeatures) \define \sup_{\labelS:~\labelS -
   \inputFeatures - \modelFeatures} \gain(\labelS;
   \modelFeatures), & &
    \maxLeakage(\inputFeatures \rightarrow \modelFeatures \mid \labelY) \define
   \sup_{\labelS:~\labelS - (\inputFeatures, \labelY) - \modelFeatures} \gain(\labelS; \modelFeatures \mid \labelY).
\end{align}

\begin{lemma}[\citet{IssaWK19}]
    \label{th:max_leakage_props}
    Maximal leakage has the following properties:
    \begin{enumerate}
        \item Maximal leakage bounds mutual information for $\alpha \in \{1, \infty\}$: $\maxLeakage(\inputFeatures \rightarrow
            \modelFeatures) \geq \utilInfo(\inputFeatures; \modelFeatures)$
        \item Maximal leakage satisfies data processing inequalities in a Markov chain $Y - X -Z$:
            \begin{align*}
                \maxLeakage(Y \rightarrow Z) \leq \min\{ \maxLeakage(X \rightarrow Z), \maxLeakage(Y \rightarrow X) \}
            \end{align*}
        \item Maximal leakage and conditional maximal leakage have the following closed forms:
            \begin{align*}
                \maxLeakage(\inputFeatures \rightarrow \modelFeatures) &= \log\left(\sum_{\modelFeaturesVal \in \modelFeaturesSpace} \max_{\inputFeaturesVal \in \inputSpace} \dataDist(\modelFeaturesVal \mid \inputFeaturesVal)\right) \\
                        \maxLeakage(\inputFeatures \rightarrow \modelFeatures \mid \labelY) &=
                \log\left( \max_{\labelYVal \in \labelYSpace} \sum_{\modelFeaturesVal \in
            \modelFeaturesSpace}
            \max_{\inputFeaturesVal \in \supp(\inputFeatures \mid \labelY = \labelYVal)} \dataDist(\modelFeaturesVal \mid
        \inputFeaturesVal, \labelYVal) \right)
            \end{align*}
    \end{enumerate}
\end{lemma}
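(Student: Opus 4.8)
Items 1--3 are established by \citet{IssaWK19} (Theorem~1 and its conditional counterpart), and I would re-derive them along the same lines. The key observation is that the closed forms in item~3 immediately yield items~1 and~2 by short algebra, so the plan is to prove item~3 first and then read off the other two.

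For the \emph{upper bound} in item~3, fix any attribute $\labelS$ with $\labelS-\inputFeatures-\modelFeatures$. The Bayes-optimal adversary of \cref{eq:bayes_adv} has success $\Pr[\labelS=\hat\labelS(\modelFeatures)]=\sum_{\modelFeaturesVal}\max_{\labelSVal}\dataDist(\labelSVal,\modelFeaturesVal)$; expanding $\dataDist(\labelSVal,\modelFeaturesVal)=\sum_{\inputFeaturesVal}\dataDist(\inputFeaturesVal)\dataDist(\labelSVal\mid\inputFeaturesVal)\dataDist(\modelFeaturesVal\mid\inputFeaturesVal)$ via the Markov chain, bounding $\dataDist(\modelFeaturesVal\mid\inputFeaturesVal)\le\max_{\inputFeaturesVal'}\dataDist(\modelFeaturesVal\mid\inputFeaturesVal')$ inside the sum, and using $\sum_{\inputFeaturesVal}\dataDist(\inputFeaturesVal)\dataDist(\labelSVal\mid\inputFeaturesVal)=\dataDist(\labelSVal)$ gives $\Pr[\labelS=\hat\labelS(\modelFeatures)]\le\big(\sum_{\modelFeaturesVal}\max_{\inputFeaturesVal}\dataDist(\modelFeaturesVal\mid\inputFeaturesVal)\big)\Pr[\labelS=\hat\labelS]$; dividing and taking the supremum over $\labelS$ yields $\maxLeakage(\inputFeatures\to\modelFeatures)\le\log\sum_{\modelFeaturesVal}\max_{\inputFeaturesVal}\dataDist(\modelFeaturesVal\mid\inputFeaturesVal)$. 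The conditional bound is the same computation carried out ``pointwise in $\labelYVal$'': since the adversary in \cref{eq:adv_gain} also observes $\labelY$, for each $\labelYVal$ the inner maximisation ranges over $\supp(\inputFeatures\mid\labelY=\labelYVal)$ only, and the outer $\max_{\labelYVal}$ appears because the worst conditioning value dominates the ratio $\Pr[\labelS=\hat\labelS(\modelFeatures,\labelY)]/\Pr[\labelS=\hat\labelS(\labelY)]$. For the matching \emph{achievability}, I would exhibit a sequence of attributes that report a maximum-likelihood estimate of which column maximiser $\arg\max_{\inputFeaturesVal}\dataDist(\modelFeaturesVal\mid\inputFeaturesVal)$ produced $\modelFeatures$, spread over a large auxiliary alphabet so that the prior $\Pr[\labelS=\hat\labelS]$ is driven down exactly to the normalisation that makes the posterior success approach $\sum_{\modelFeaturesVal}\max_{\inputFeaturesVal}\dataDist(\modelFeaturesVal\mid\inputFeaturesVal)$ times that same normalisation; the conditional case applies this construction inside the worst $\labelYVal$.

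Given item~3, the rest is short. For item~1 with $\alpha=\infty$, take $\labelS=\inputFeatures$: the chain $\inputFeatures-\inputFeatures-\modelFeatures$ holds trivially and $\gain(\inputFeatures;\modelFeatures)=\utilInfo(\inputFeatures;\modelFeatures)$ by definition, so $\maxLeakage(\inputFeatures\to\modelFeatures)\ge\utilInfo(\inputFeatures;\modelFeatures)$. For $\alpha=1$, Jensen's inequality applied to $I(\inputFeatures;\modelFeatures)=\mathbb{E}\log\frac{\dataDist(\modelFeatures\mid\inputFeatures)}{\dataDist(\modelFeatures)}$ bounds it by $\log\sum_{\modelFeaturesVal}\frac{\sum_{\inputFeaturesVal}\dataDist(\inputFeaturesVal)\dataDist(\modelFeaturesVal\mid\inputFeaturesVal)^2}{\sum_{\inputFeaturesVal}\dataDist(\inputFeaturesVal)\dataDist(\modelFeaturesVal\mid\inputFeaturesVal)}$, and each summand is a weighted average of the values $\dataDist(\modelFeaturesVal\mid\inputFeaturesVal)$, hence $\le\max_{\inputFeaturesVal}\dataDist(\modelFeaturesVal\mid\inputFeaturesVal)$, which is the closed form of item~3. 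For item~2, both inequalities fall out of the closed form: substituting $\dataDist(\modelFeaturesVal\mid\labelYVal)=\sum_{\inputFeaturesVal}\dataDist(\inputFeaturesVal\mid\labelYVal)\dataDist(\modelFeaturesVal\mid\inputFeaturesVal)$ and pulling the maxima through $\sum_{\modelFeaturesVal}$ gives $\maxLeakage(\labelY\to\modelFeatures)\le\maxLeakage(\inputFeatures\to\modelFeatures)$, and the analogous manipulation, now with the outer sum over $\inputFeaturesVal$ rather than $\modelFeaturesVal$, gives $\maxLeakage(\labelY\to\modelFeatures)\le\maxLeakage(\labelY\to\inputFeatures)$.

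The main obstacle is the achievability half of item~3: the upper bounds and items~1--2 are essentially algebra once the closed form is available, whereas constructing attributes that attain the closed-form value -- calibrating the target's prior and posterior so that they match in the limit, while handling ties and zero-probability $(\inputFeaturesVal,\modelFeaturesVal)$ pairs -- is the one genuinely delicate step, and is exactly where I would lean on the construction of \citet{IssaWK19}.
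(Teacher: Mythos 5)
Your plan is correct, but note that the paper does not actually prove this lemma at all: it is imported verbatim from \citet{IssaWK19} (their Theorem~1 and its conditional counterpart), so your proposal supplies a derivation where the paper only supplies a citation. Your derivation is sound and consistent with that source. The converse half of item~3 is exactly the standard argument: expanding $\Pr[\labelS=\hat\labelS(\modelFeatures)]=\sum_{\modelFeaturesVal}\max_{\labelSVal}\dataDist(\labelSVal,\modelFeaturesVal)$ through the Markov chain and bounding $\dataDist(\modelFeaturesVal\mid\inputFeaturesVal)\leq\max_{\inputFeaturesVal'}\dataDist(\modelFeaturesVal\mid\inputFeaturesVal')$ gives the unconditional bound, and your conditional version implicitly uses the mediant inequality $\frac{\sum_y a_y}{\sum_y b_y}\leq\max_y\frac{a_y}{b_y}$ together with the unconditional argument applied to each conditional law $\dataDist_{\cdot\mid\labelY=\labelYVal}$ (which is also why the inner maximisation must be restricted to $\supp(\inputFeatures\mid\labelY=\labelYVal)$) --- that step is stated tersely but is correct. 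Your reductions of items~1 and~2 to the closed form are also fine: for $\alpha=\infty$ taking $\labelS=\inputFeatures$ in the supremum gives $\gain(\inputFeatures;\modelFeatures)=\gain_{\phantom{x}}\!=\utilInfo(\inputFeatures;\modelFeatures)$ directly, the Jensen computation handles $\alpha=1$, and pushing the Markov factorisation $\dataDist(\modelFeaturesVal\mid\labelYVal)=\sum_{\inputFeaturesVal}\dataDist(\inputFeaturesVal\mid\labelYVal)\dataDist(\modelFeaturesVal\mid\inputFeaturesVal)$ through the closed form yields both halves of the data-processing inequality. The one piece you leave as a sketch, achievability in item~3, is the shattering construction; the paper in fact restates and proves exactly that part separately as \cref{def:max-revealing-attr} and \cref{th:max_revealing_class} (the maximally revealing attribute $\labelS^*$, whose prior MAP success is $p_{\min}$ and whose posterior success is $p_{\min}\sum_{\modelFeaturesVal}\max_{\inputFeaturesVal}\dataDist(\modelFeaturesVal\mid\inputFeaturesVal)$), so your deferral to \citet{IssaWK19} there is both legitimate and aligned with what the paper itself does.
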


\para{Maximally Revealing Attributes}
We demonstrate that there exists an attribute that is maximally revealing, i.e., achieves
the supremum $\sup_{\labelS:~\labelS - \inputFeatures - \modelFeatures} \gain(\labelS;
\modelFeatures)$, and show how to construct this attribute.

\begin{definition}\label{def:max-revealing-attr}
    For a given $P_X$, a \emph{maximally revealing attribute} $S^*$~\citep[see ``shattering distribution'',][]{IssaWK19}
    is defined over $\labelSSpace \define \inputSpace \times \mathbb{Z}$ via the following conditional probability distribution:
    \begin{equation}
        P_{S^* \mid X}((x', k) \mid x) \define \begin{cases}
            \frac{p_{\min}}{P_X(x)},& x' = x \text{ and } 1 \leq k \leq \lfloor r(x) \rfloor \\ 
            1 - \frac{(\lceil r(x) \rceil - 1) \cdot p_{\min}}{P_X(x)},& x' = x \text{ and } k = \lceil r(x) \rceil \\
            0, & x' \neq x,
        \end{cases}
    \end{equation}
    for any $x, x' \in \inputSpace$, and $k \in [1, \ldots, \lceil r(x) \rceil]$, where $p_{\min} \define \min_{x \in \inputSpace} P_X(x)$ and $r(x) \define \nicefrac{P_X(x)}{p_{\min}}$.
\end{definition}

\begin{proposition}[\citet{IssaWK19}]
    \label{th:max_revealing_class}
    For any $Z = f_E(X)$ and $P_X$, the maximally revealing attribute $S^*$ leads to the highest inference gain:
    \begin{align}
        \sup_{\labelS:~\labelS - \inputFeatures - \modelFeatures} \gain(\labelS; \modelFeatures) =
       \gain(\labelS^*; \modelFeatures)
    \end{align}
\end{proposition}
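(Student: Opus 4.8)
The plan is to prove the equality by showing that the maximally revealing attribute $S^*$ attains the known closed form of maximal leakage. By the definition of maximal leakage in \cref{eq:max_leakage_full}, the left-hand side of the proposition is exactly $\maxLeakage(\inputFeatures \rightarrow \modelFeatures)$, and by item~3 of \cref{th:max_leakage_props} this equals $\log\bigl(\sum_{\modelFeaturesVal \in \modelFeaturesSpace} \max_{\inputFeaturesVal \in \inputSpace} \dataDist(\modelFeaturesVal \mid \inputFeaturesVal)\bigr)$. Since $P_{S^* \mid \inputFeatures}$ in \cref{def:max-revealing-attr} depends on $\inputFeatures$ alone, $S^* - \inputFeatures - \modelFeatures$ is a valid Markov chain, so $\gain(S^*;\modelFeatures) \le \maxLeakage(\inputFeatures \rightarrow \modelFeatures)$ is automatic. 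The whole content is therefore the matching lower bound, which I would obtain by evaluating $\gain(S^*;\modelFeatures) = \log \tfrac{\Pr[S^* = \hat S^*(\modelFeatures)]}{\Pr[S^* = \hat S^*]}$ exactly, computing numerator and denominator separately.

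For the denominator, I would marginalise: $\Pr[S^* = (\inputFeaturesVal', k)] = \dataDist_{\inputFeatures}(\inputFeaturesVal')\, P_{S^* \mid \inputFeatures}((\inputFeaturesVal', k)\mid \inputFeaturesVal')$, which equals $p_{\min}$ for every $k \le \lfloor r(\inputFeaturesVal')\rfloor$ and equals $\dataDist_{\inputFeatures}(\inputFeaturesVal') - (\lceil r(\inputFeaturesVal')\rceil - 1)\,p_{\min} \le p_{\min}$ for the remainder index $k = \lceil r(\inputFeaturesVal')\rceil$ (the inequality is just $r(\inputFeaturesVal') \le \lceil r(\inputFeaturesVal')\rceil$; full support makes $p_{\min} > 0$). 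Because $r(\inputFeaturesVal') = \dataDist_{\inputFeatures}(\inputFeaturesVal')/p_{\min} \ge 1$ for every $\inputFeaturesVal'$, at least one atom has mass exactly $p_{\min}$ and none exceeds it, so $\Pr[S^* = \hat S^*] = p_{\min}$.

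For the numerator, the Bayes-optimal adversary of \cref{eq:bayes_adv} attains $\Pr[S^* = \hat S^*(\modelFeatures)] = \sum_{\modelFeaturesVal} \max_{s} \Pr[S^* = s, \modelFeatures = \modelFeaturesVal]$. Using the Markov chain $S^* - \inputFeatures - \modelFeatures$ together with the fact that $P_{S^* \mid \inputFeatures}((\inputFeaturesVal', k)\mid \inputFeaturesVal) = 0$ unless $\inputFeaturesVal = \inputFeaturesVal'$, I get $\Pr[S^* = (\inputFeaturesVal', k), \modelFeatures = \modelFeaturesVal] = P_{S^* \mid \inputFeatures}((\inputFeaturesVal', k)\mid \inputFeaturesVal')\, \dataDist(\modelFeaturesVal \mid \inputFeaturesVal')\, \dataDist_{\inputFeatures}(\inputFeaturesVal')$, which for a full index $k \le \lfloor r(\inputFeaturesVal')\rfloor$ is exactly $p_{\min}\,\dataDist(\modelFeaturesVal \mid \inputFeaturesVal')$ and for the remainder index is $\le p_{\min}\,\dataDist(\modelFeaturesVal \mid \inputFeaturesVal')$ by the same bound as before. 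Hence $\max_s \Pr[S^* = s, \modelFeatures = \modelFeaturesVal] = p_{\min}\max_{\inputFeaturesVal} \dataDist(\modelFeaturesVal \mid \inputFeaturesVal)$, so $\Pr[S^* = \hat S^*(\modelFeatures)] = p_{\min}\sum_{\modelFeaturesVal}\max_{\inputFeaturesVal}\dataDist(\modelFeaturesVal\mid\inputFeaturesVal)$. The factor $p_{\min}$ cancels, giving $\gain(S^*;\modelFeatures) = \log\bigl(\sum_{\modelFeaturesVal}\max_{\inputFeaturesVal}\dataDist(\modelFeaturesVal\mid\inputFeaturesVal)\bigr) = \maxLeakage(\inputFeatures\rightarrow\modelFeatures)$, which is the asserted equality.

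The only delicate step is the bookkeeping around the floor and ceiling in \cref{def:max-revealing-attr}: one must treat the case where $r(\inputFeaturesVal')$ is an integer separately (then $\lfloor r(\inputFeaturesVal')\rfloor = \lceil r(\inputFeaturesVal')\rceil$ and the ``remainder'' atom coincides with an ordinary atom of mass $p_{\min}$), and verify in the non-integer case that the remainder atom's joint mass with $\modelFeatures = \modelFeaturesVal$ never exceeds $p_{\min}\,\dataDist(\modelFeaturesVal\mid\inputFeaturesVal')$, so it can never be the arg-max; everything else is routine marginalisation. If preferred, the lower bound could instead be deferred to the achievability half of the closed-form formula for maximal leakage in \citet{IssaWK19}, but writing out $\gain(S^*;\modelFeatures)$ directly keeps the argument self-contained.
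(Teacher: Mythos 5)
Your proposal is correct and follows essentially the same route as the paper's proof: both reduce the claim to matching $\gain(S^*;\modelFeatures)$ against the closed form $\log\bigl(\sum_{\modelFeaturesVal}\max_{\inputFeaturesVal}\dataDist(\modelFeaturesVal\mid\inputFeaturesVal)\bigr)$ from \cref{th:max_leakage_props}, computing $\max_s P_{S^*,\modelFeatures}(s,\modelFeaturesVal) = p_{\min}\max_{\inputFeaturesVal}\dataDist(\modelFeaturesVal\mid\inputFeaturesVal)$ via the Markov chain and $\max_s P_{S^*}(s)=p_{\min}$, then cancelling $p_{\min}$. Your extra bookkeeping for the remainder atom and the integer-$r(x)$ case only makes explicit what the paper's argument leaves implicit.
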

\begin{proof}
    It is sufficient to show that for any $P_{Z \mid X}$ and $P_X$, we have $\maxLeakage(X \rightarrow Z) = \gain(S^*; Z)$. By \cref{th:max_leakage_props} and the definition of $\gain$, this is equivalent to showing:
   \begin{equation}
        \label{eq:proof-def-equivalence}
        \log\left(\sum_{\modelFeaturesVal \in \modelFeaturesSpace} \max_{\inputFeaturesVal \in
        \inputSpace} \dataDist_{Z \mid X}(\modelFeaturesVal \mid \inputFeaturesVal)\right) = \log\left(\frac{\sum_{z \in \modelFeaturesSpace} \max_{s \in \labelSSpace} P_{S^*, Z}(s, z)}{\max_{s \in \labelSSpace} P_{S^*}(s) } \right).
   \end{equation}
   We can see that $\max_{s \in \labelSSpace} P_{S^* \mid Z}(s, z)$ from the RHS has the following form:
   \begin{align}
       \max_{(x', k) \in \labelSSpace} P_{S^*, Z}((x', k), z) &= \max_{(x', k) \in \labelSSpace} P_{S^* \mid X}((x', k) \mid x') \cdot P_X(x') \cdot P_{Z \mid X}(z \mid x') \\
       &= \max_{(x', k) \in \labelSSpace \text{ s.t. } k = 1} P_{S^* \mid X}((x', 1) \mid x') \cdot P_X(x') \cdot P_{Z \mid X}(z \mid x') \\
       &= \max_{x \in \inputSpace} \frac{p_{\min}}{P_X(x)} \cdot P_X(x) \cdot P_{Z \mid X}(z \mid x) = p_{\min} \cdot \max_{x \in \inputSpace} P_{Z \mid X}(z \mid x),
   \end{align}
   where the first equality is by Markov chain $S^* - X - Z$, and the second and third are by \cref{def:max-revealing-attr}. 
   Combining the resulting expression and the fact that by \cref{def:max-revealing-attr} we have
   \begin{align}
       \max_{s \in \labelSSpace} P_{S^*}(s) = \max_{s \in \labelSSpace} \sum_{x \in \inputSpace} P_{S^* \mid X}(s) \cdot P_X(x) = p_{\min},
    \end{align}
    we get the sought result.
\end{proof}

We also show the following technical property of a maximally revealing attribute.
\begin{proposition}
    \label{th:max_revealing_and_positivity}
    Suppose that $|\inputSpace| > 1$. 
    The conditional distribution $\dataDist_{\labelS^* \mid \inputFeatures}$ is non-positive: there
    exist a $\labelSVal \in \labelSSpace$ and $\inputFeaturesVal \in \inputSpace$ such that
    $\dataDist_{\labelS^* \mid \inputFeatures}(\labelSVal \mid \inputFeaturesVal) = 0$.

    \proof{By \cref{def:max-revealing-attr}, for any $x \in \inputSpace$ it is sufficient to take any $(x', k) \in \labelSSpace$ where $x' \neq x$.}
\end{proposition}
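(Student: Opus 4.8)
The plan is to read the claim directly off the piecewise form of $\dataDist_{\labelS^* \mid \inputFeatures}$ given in \cref{def:max-revealing-attr}. The only thing to notice is that the sensitive-attribute space is the product $\labelSSpace = \inputSpace \times \mathbb{Z}$, and that the conditional law of $\labelS^*$ given $\inputFeatures = \inputFeaturesVal$ lives entirely on pairs whose first coordinate equals $\inputFeaturesVal$: by its third branch, $\dataDist_{\labelS^* \mid \inputFeatures}((\inputFeaturesVal', k) \mid \inputFeaturesVal) = 0$ for every $\inputFeaturesVal' \neq \inputFeaturesVal$. So it suffices to exhibit a single element of $\labelSSpace$ with a ``wrong'' first coordinate, and this is exactly where the hypothesis $|\inputSpace| > 1$ enters.

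Concretely, I would argue as follows. Fix an arbitrary $\inputFeaturesVal \in \inputSpace$; since $|\inputSpace| > 1$, pick $\inputFeaturesVal' \in \inputSpace$ with $\inputFeaturesVal' \neq \inputFeaturesVal$. Take the index $k = 1$: this is admissible because $r(\inputFeaturesVal') = \dataDist_{\inputFeatures}(\inputFeaturesVal')/p_{\min} \geq 1$, so $\lceil r(\inputFeaturesVal') \rceil \geq 1$ and the pair $(\inputFeaturesVal', 1)$ is a genuine element of $\labelSSpace$. Setting $\labelSVal \define (\inputFeaturesVal', 1)$, the third case of \cref{def:max-revealing-attr} yields $\dataDist_{\labelS^* \mid \inputFeatures}(\labelSVal \mid \inputFeaturesVal) = 0$, which is precisely the witness demanded by the statement.

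There is essentially no obstacle here; the only points needing even minor care are checking that $\labelSSpace$ really does contain a pair with a mismatched first coordinate (which is the entire role of $|\inputSpace| > 1$) and that the chosen index $k$ lies in the valid range, both immediate from the definition. The reason it is worth isolating this fact is its downstream use: a maximally revealing attribute necessarily has a conditional law that vanishes somewhere, whereas \cref{ass:positivePosterior} forces $\dataDist_{\labelY \mid \inputFeatures}$ to be everywhere positive. This mismatch is what ultimately prevents the task label $\labelY$ from acting as a maximally revealing attribute, and hence keeps the least-privilege/utility trade-off of \cref{th:lpl_impossibility} as stringent as its unconditional counterpart in \cref{th:ulpl_impossibility}.
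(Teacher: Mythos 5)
Your proof is correct and is essentially the paper's own argument: fix $x$, use $|\inputSpace|>1$ to pick $x'\neq x$, and read off the zero from the third branch of \cref{def:max-revealing-attr}. The only addition is your (harmless, and indeed careful) check that $k=1$ is an admissible index, which the paper leaves implicit.
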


\subsection{Omitted Proofs} Next, we provide the proofs of the formal statements in the main body of the
paper.

\begin{proof}[Proof of \cref{th:ulpl_impossibility}]
    By construction, we have a Markov chain $\labelY - \inputFeatures - \modelFeatures$. 
    Observe that $\labelS^* \neq \labelY$ by
    \cref{th:max_revealing_and_positivity} and strict positivity
            (\cref{ass:positivePosterior}). Therefore,
    \begin{align}
        \sup_{\labelS:~\labelS - \inputFeatures - \modelFeatures,~\labelS \neq \labelY} \gain(\labelS; \modelFeatures)
        = \gain(\labelS^*; \modelFeatures)
        = \sup_{\labelS:~\labelS - \inputFeatures - \modelFeatures} \gain(\labelS; \modelFeatures)
        = \maxLeakage(\inputFeatures \rightarrow \modelFeatures)
    \end{align}
    by \cref{th:max_revealing_class} and the definition of maximal leakage. Finally, applying the properties in \cref{th:max_leakage_props}, we have $\utilInfo(\labelY; \modelFeatures) \leq
    \maxLeakage(\labelY \rightarrow \modelFeatures) \leq \maxLeakage(\inputFeatures \rightarrow
    \modelFeatures) = \sup_{\labelS:~\labelS - \inputFeatures - \modelFeatures,~\labelS \neq \labelY} \gain(\labelS; \modelFeatures) \leq \gamma$.
\end{proof}

\begin{remark}
\cref{th:ulpl_impossibility} only requires that $Y \neq S^*$. This holds under weaker assumptions than the strict positivity of the posterior (\cref{ass:positivePosterior}). For instance, it is sufficient that there exist any $\inputFeaturesVal, \inputFeaturesVal', \labelYVal$ such that both $\dataDist_{\inputFeatures \mid \labelY}(\inputFeaturesVal \mid \labelYVal) > 0$ and $\dataDist_{\inputFeatures \mid \labelY}(\inputFeaturesVal' \mid \labelYVal) > 0$.
\end{remark}

\begin{proof}[Proof of \cref{stmt:perfect-lpp}]
     For $\gamma = 0$, by \cref{th:max_leakage_props}  we have that $\maxLeakage(\inputFeatures \rightarrow \modelFeatures \mid \labelY) = I(\inputFeatures; \modelFeatures \mid \labelY) = 0$. As conditional mutual information is consistent with conditional independence, this holds if and only if we have Markov chain $\inputFeatures - \labelY - \modelFeatures$. 
\end{proof}


\begin{proof}[Proof of \cref{stmt:det-labels}]
If $Y=g(X),$ where $g(\cdot)$ is a deterministic function, we can choose $Z = f(Y)=f(g(X)).$
Evidently, this satisfies the Markov relationship $f(g(X)) - g(X) - X,$ {\it i.e.,} $Z-Y-X.$ (Note that $f(\cdot)$ may even be a randomized function for this to hold.) But this Markov relation implies $\maxLeakage(\inputFeatures \rightarrow \modelFeatures \mid \labelY)=0.$\end{proof}

\cref{stmt:positive-posterior-perfect-lpp} follows as a consequence of the conditional independence in \cref{stmt:perfect-lpp} and the fact that we have $\labelY - \inputFeatures - \modelFeatures$ by construction.

\begin{proof}[Proof of \cref{th:lpl_impossibility}]
    First, we show that in our Markov chain and under the strictly positive posterior assumption, we
    have a surprising result that $\maxLeakage(\inputFeatures \rightarrow \modelFeatures \mid
    \labelY) = \maxLeakage(\inputFeatures \rightarrow \modelFeatures)$.


    To see this, observe that maximal leakage has the following closed form by
    \cref{th:max_leakage_props}:
    \begin{align}
        \maxLeakage(\inputFeatures \rightarrow \modelFeatures \mid \labelY) &=
        \log\left( \max_{\labelYVal \in \labelYSpace)} \sum_{\modelFeaturesVal \in
            \modelFeaturesSpace}
            \max_{\inputFeaturesVal \in \supp(\inputFeatures \mid \labelY = \labelYVal)} \dataDist(\modelFeaturesVal \mid
        \inputFeaturesVal, \labelYVal) \right) \\
        & = \log\left( \max_{\labelYVal \in \labelYSpace} \sum_{\modelFeaturesVal \in
            \modelFeaturesSpace}
            \max_{\inputFeaturesVal \in \supp(\inputFeatures \mid \labelY = \labelYVal)} \dataDist(\modelFeaturesVal \mid
        \inputFeaturesVal) \right),
    \end{align}
    where the second equality is by the Markov chain $\labelY - \inputFeatures - \modelFeatures$.

    Next, observe that we have $\dataDist(\inputFeaturesVal \mid \labelYVal) \propto \dataDist(\labelYVal \mid
    \inputFeaturesVal)
    \cdot \dataDist(\inputFeaturesVal)$ which, by assumption, is positive
    so long as $\inputFeaturesVal \in \supp(\inputFeatures)$. As a
    consequence, the support of $\inputFeatures$ is independent of $\labelY$:
    $\supp(\inputFeatures \mid \labelY = \labelYVal) = \supp(\inputFeatures)$, for
    any $\labelYVal \in \labelYSpace$.
    Therefore, we can simplify the last form:
    \begin{align}\label{eq:proof-cond-uncond}
        \maxLeakage(\inputFeatures \rightarrow \modelFeatures \mid \labelY) &=
        \log\left( \max_{\labelYVal \in \labelYSpace} \sum_{\modelFeaturesVal \in
            \modelFeaturesSpace}
            \max_{\inputFeaturesVal \in \supp(\inputFeatures \mid \labelY = \labelYVal)} \dataDist(\modelFeaturesVal \mid
        \inputFeaturesVal) \right) \\
        & = \log\left(\sum_{\modelFeaturesVal \in \modelFeaturesSpace} \max_{\inputFeaturesVal \in
        \inputSpace} \dataDist(\modelFeaturesVal \mid \inputFeaturesVal)\right),
    \end{align}
    which is an equivalent form of $\maxLeakage(\inputFeatures \rightarrow
    \modelFeatures)$.

    Finally, to obtain the trade-off, it suffices to observe that $\utilInfo(\labelY; \modelFeatures) \leq \maxLeakage(\labelY \rightarrow \modelFeatures) \leq
    \maxLeakage(\inputFeatures \rightarrow \modelFeatures) \leq \gamma$, where the the first and the second inequalities are by properties of maximal leakage in \cref{th:max_leakage_props}.
\end{proof}

One way to interpret this result is that the maximally revealing attribute $S^*$ is not sensitive to conditioning on $\labelY$ so long as
 we have strict positivity of the posterior (\cref{ass:positivePosterior}).

\begin{proof}[Proof of \cref{th:ldp-lpp}]
    \citet{IssaWK19} show that if the map $\model_E(\cdot)$ is $\varepsilon$-LDP, then $\maxLeakage(\inputFeatures \rightarrow \modelFeatures) \leq \varepsilon$, where $\modelFeatures = \model_E(\inputFeatures)$. From the form in  \cref{eq:proof-cond-uncond}, we have that $\maxLeakage(\inputFeatures \rightarrow \modelFeatures \mid \labelY) \leq \maxLeakage(\inputFeatures \rightarrow \modelFeatures) \leq \varepsilon$.
\end{proof}



\section{Empirical Evaluation of the Utility and LPP Trade-off}
\label{sec:add_exp}
In this section, we provide additional details, as well as, additional empirical results that show that the strict trade-off between utility and the LPP  hold for any feature representation regardless of the feature learning technique, model architecture, or dataset.

\subsection{Details of Setup in Section~\ref{sec:empirical}}
Following \citet{MelisSDS19}, we use a convolutional network with three spatial convolution layers with 32, 64, and 128 filters, kernel size set to (3, 3), max pooling layers with pooling size set to 2, followed by two fully connected layers of size 256 and 2. We use ReLU as the activation function for all layers. 

\subsection{Adversarial Inference Gain}

\begin{figure}[h!]
    \centering
    \includegraphics[width=\figurewidth]{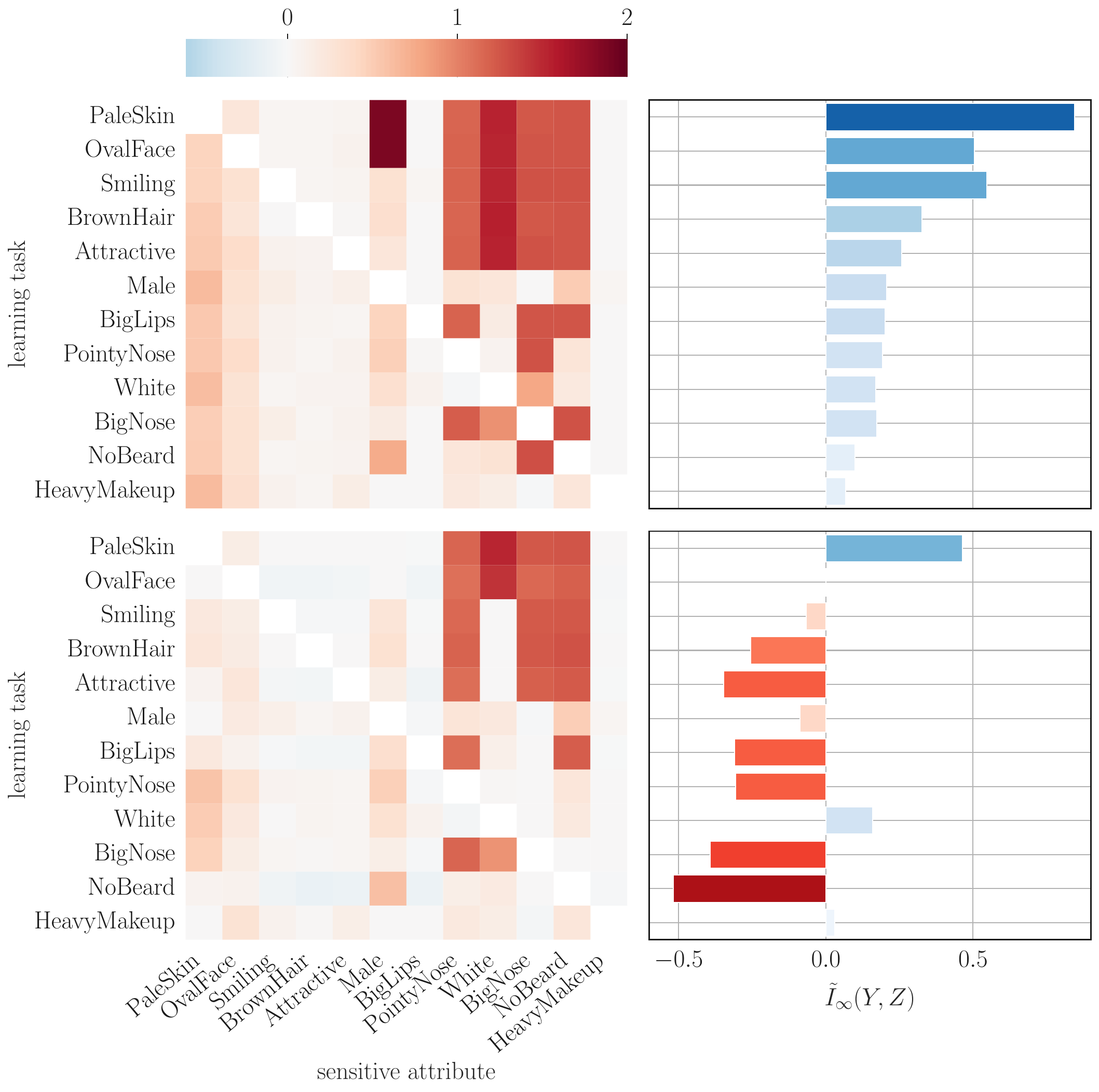}
    \caption{\textbf{If the model-generated representations have utility for the task\subfig{right}, there exists a sensitive attribute with an even higher inference gain for the adversary\subfig{left, \textcolor{red!80!black}{red} means more leakage}.} This holds for both standard ERM\subfig{top} and attribute censoring\subfig{bottom} where we censor the attribute with highest leakage in the respective ERM model (marked as $\textcolor{gray}{\tikzmarknode[strike out,draw]{2}{\text{\footnotesize \faEye}}}$). Censoring has a `whack-a-mole' effect: as we censor one attribute, leakage of another attribute increases.
    }
    \label{fig:adv_gain}
    \vspace*{-1mm}
\end{figure}

\cref{fig:adv_gain} shows the adversary's inference gain $\gain(\labelS; \modelFeatures \mid \labelY)$ which measures the adversary's classification accuracy for sensitive attribute $\labelS$ (each column) normalised by a task's (each row) fundamental leakage.

\subsection{Across Learning Techniques}
\cref{th:lpl_impossibility} implies that the strict trade-off between a representation's utility for its intended task and the LPP holds regardless of the learning technique used to obtain the feature map $\model_E(\inputFeatures) = \modelFeatures$. In \cref{subsec:emp_tradeoff}, we show that indeed even with attribute censoring through gradient reversal, an adversary can always find a data attribute for which $\Delta_{Adv} > 0$ and that thus violates the LPP.
In this section, we experimentally demonstrate that the same applies to other learning techniques that aim to hide sensitive information about the original data $\inputFeatures$, such as adversarial representation learning.




\citet{ZhaoCTG20} empirically compare the trade-off between hiding sensitive information and task accuracy of various attribute obfuscation algorithms. They find that together with gradient reversal, Maximum Entropy Adversarial Representation Learning (MAX-ENT) provides the best trade-off. We run a simple experiment on the Adult dataset~\citep{Adult}, the data used by \citeauthor{ZhaoCTG20}, that shows that the trade-off predicted by \cref{th:lpl_impossibility} also applies to the representations learned by a model trained under MAX-ENT.

\para{Experiment Setup} We use the exact same model architecture and data as \citet{ZhaoCTG20}. We train the model to predict attribute `income' and adversaries for four sensitive attributes (`age', `education', `race', and `sex'). We then calculate the utility and inference gain as described in \cref{sec:empirical}.

\begin{figure}[h!]
    \centering
	\includegraphics[width=0.5\linewidth]{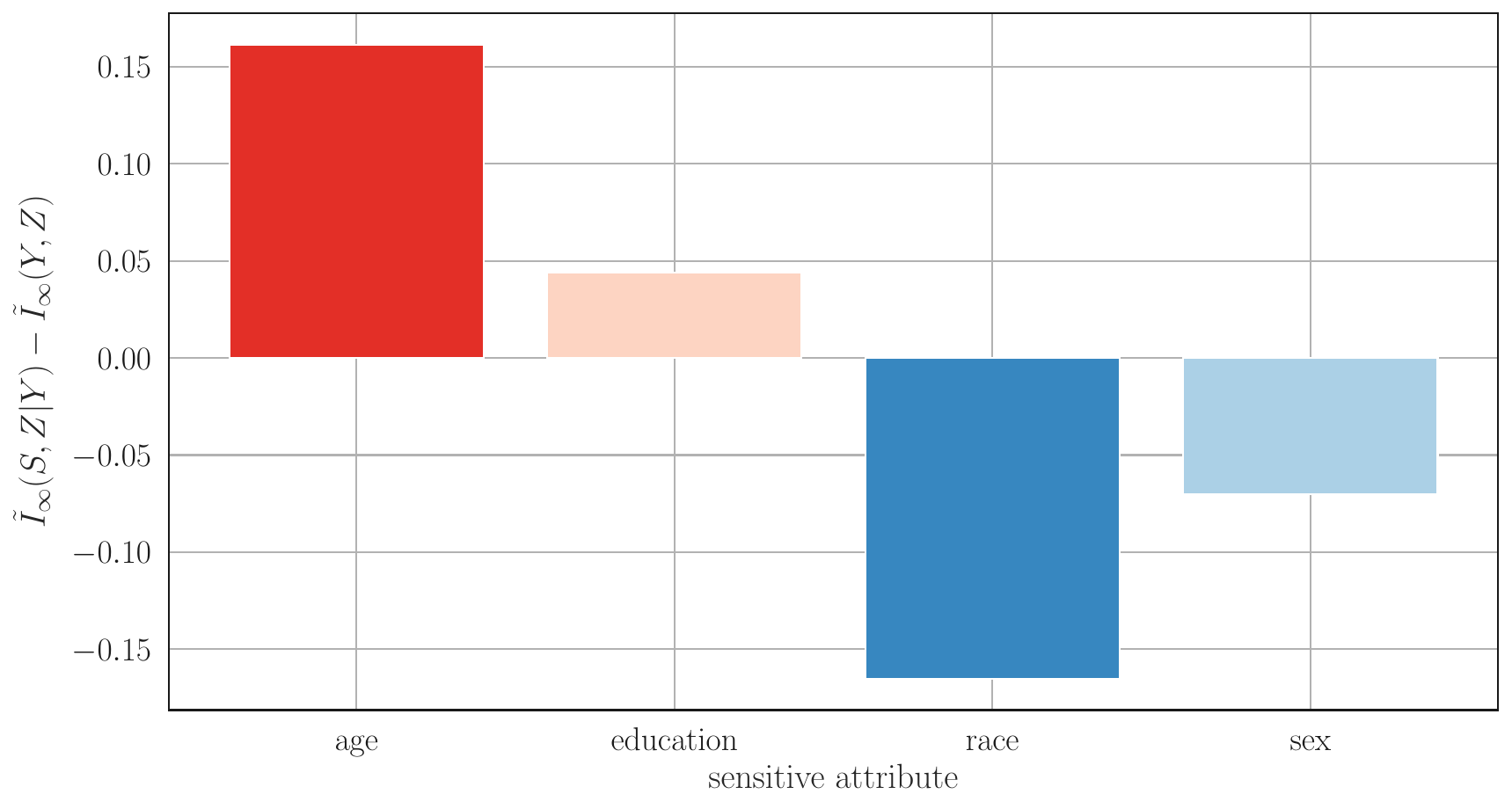}
	\caption{\textbf{The least-privilege and utility trade-off holds regardless of the representation learning technique used.} We show the delta between the adversary's inference gain and model utility for a model trained under MAX-ENT for learning task `income' on the Adult dataset. For two out of four sensitive attributes tested, the adversary's inference gain exceeds the model's utility gain.}
	\label{fig:maxent}
 \vspace*{-2mm}
\end{figure}

\cref{fig:maxent} shows that, as expected, even for a model trained under attribute obfuscation with MAX-ENT, the adversary's inference gain exceeds the model's utility gain for two out of the four sensitive attributes tested. This further supports our theoretical finding that the trade-off between LPP and utility for a prediction task of a representation applies \emph{regardless of how these representations are learned}.

\subsection{Across MLaaS settings}
We show that, as discussed in \cref{sec:background}, the fundamental trade-off of \cref{th:lpl_impossibility} equally applies to the collaborative learning setting in which users share gradients instead of raw data records to reduce unintended information leakage~\citep{MelisSDS19}. 

\para{Experiment Setup} We replicate the single batch property inference attack described by \citet{MelisSDS19}. In this setting, the feature representation $\modelFeatures = \model_E(\inputFeatures)$ shared with the service provider is a record's gradients computed across all layers of the model. As \citet{MelisSDS19} point out, these "gradient updates can [...] be used to infer feature values", i.e., are just a noisy version of the feature representations learned by the model.

\begin{figure}[h!]
    \centering
    \includegraphics{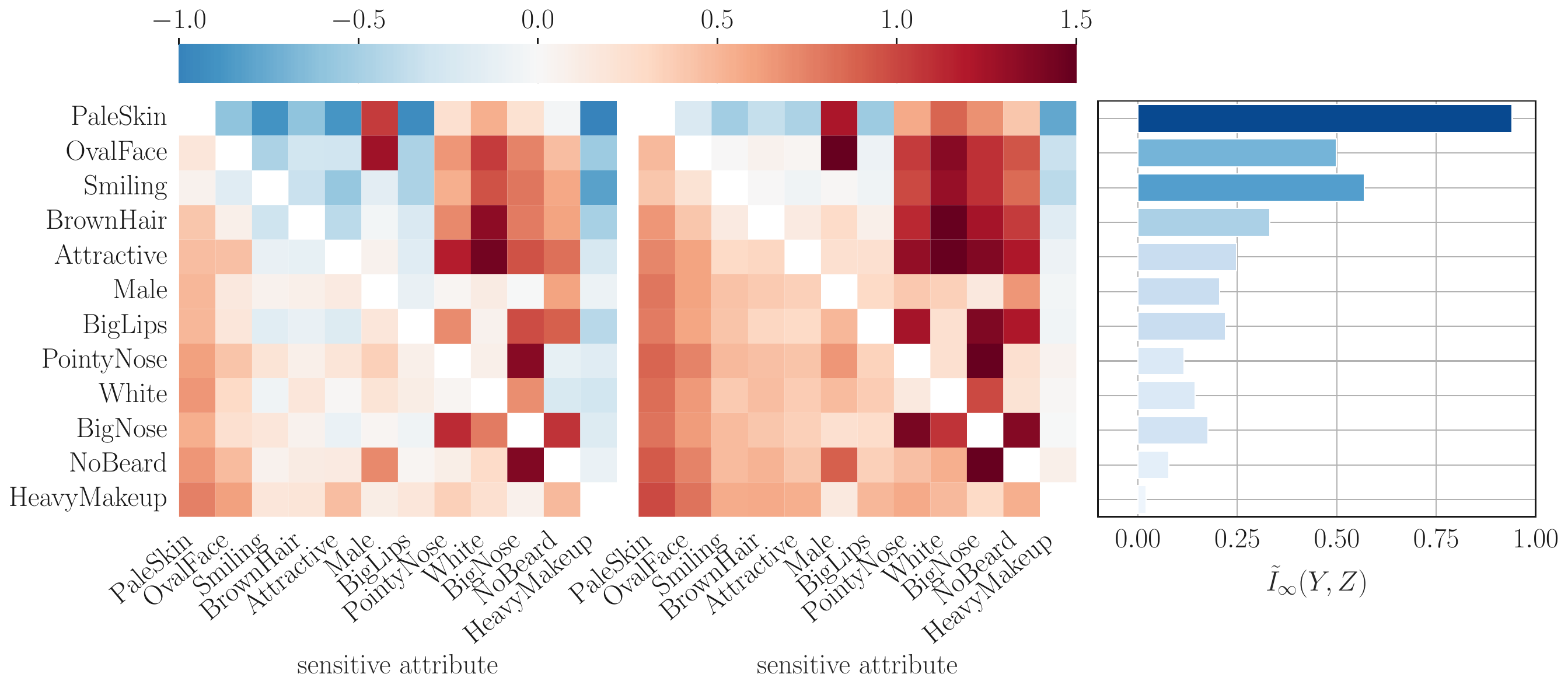}
    \caption{\textbf{The least-privilege and utility trade-off applies also the collaborative learning setting} The inference gain of an adversary that observes a record's representation in the form of gradients\subfig{left} or feature activations\subfig{middle} always violates the LPP for at least one sensitive attribute (\textcolor{red!80!black}{red} means that the adversary's inference gain is larger than the features' utility gain plotted on the \subfig{right})}
    \label{fig:gradients}
    \vspace*{-2mm}
\end{figure}

\cref{fig:gradients} confirms that an adversary that only has indirect access to a record's feature representation in the form of gradients~\subfig{left} can always find at least on sensitive attribute for which $\Delta_\mathsf{Adv} > 0$. The adversary's inference gain increases when the adversary has direct access to the feature representations \subfig{middle} but as predicted follows the fundamental trade-off predicted by \cref{th:lpl_impossibility} even when she only observes gradients.

\subsection{Across Hidden Layers}
To demonstrate that, as predicted by \cref{th:lpl_impossibility}, the strict trade-off between features' utility for a downstream prediction task and the LPP applies regardless of a model's architecture, or the structure of the feature encoder $Z = f_E(X)$, we conduct additional experiments on the LFWA+ image dataset (see \cref{sec:empirical})

To show that the trade-off holds regardless of the structure of the feature encoder, we assess the leakage of the \CNN model described in \cref{sec:empirical} across different hidden layers of the model. \cref{fig:layers} shows the delta between the adversary's inference gain and the features' utility for the intended task $\Delta_\mathsf{Adv} = \gainEst(\labelS, \modelFeatures \mid \labelY) - \gainEst(\labelY, \modelFeatures)$ across hidden layers of the model (from higher\subfig{left} to lower\subfig{right} layers). Following \cref{th:lpl_impossibility}, for every learning task and representation there always exists at least one sensitive attribute for which $\Delta_\mathsf{Adv} > 0$ and that violates the LPP. In line with prior results~\citep{MelisSDS19, MoBMH21}, we find that lower layers \subfig{right} lead to a slightly higher inference gain for the adversary; although these differences are not significant. 

\begin{figure}[h!]
    \centering
	\includegraphics[width=\figurewidth]{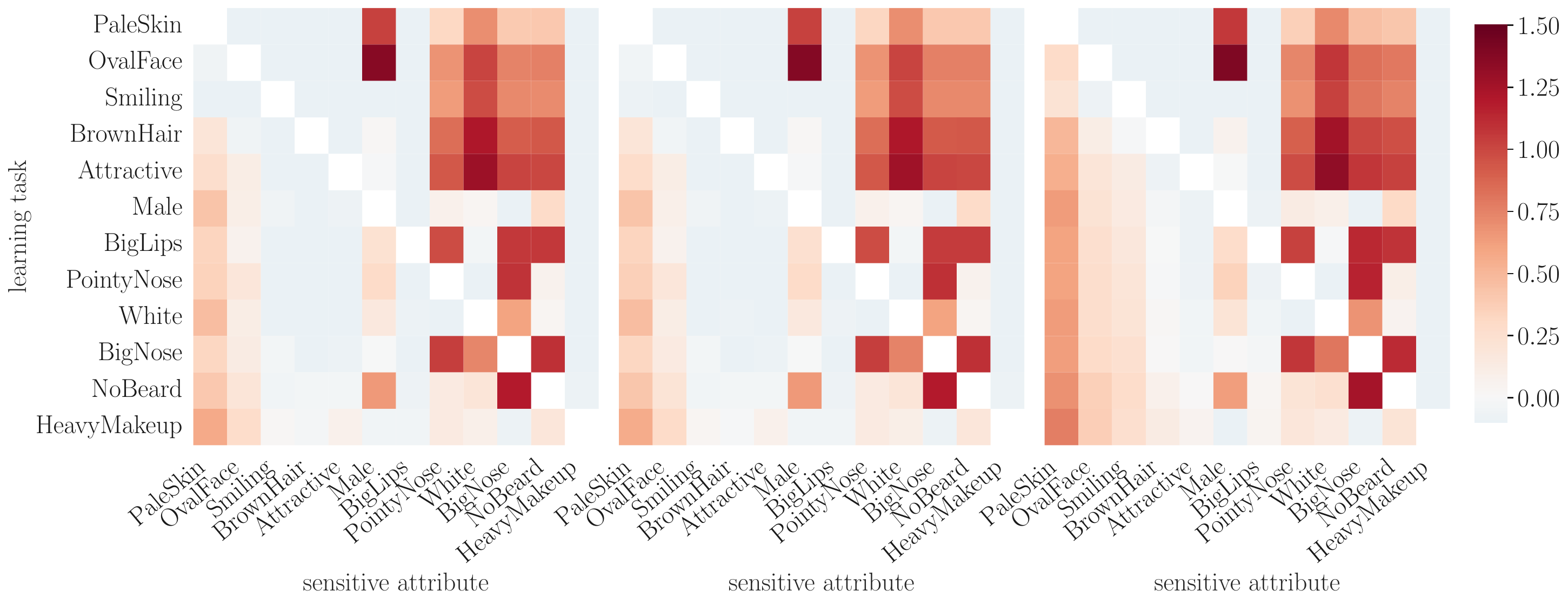}
	\caption{\textbf{The least-privilege and utility trade-off holds across hidden layers of the model} The inference gain of an adversary that observes a record's representation at the last\subfig{left}, or first\subfig{middle} and second convolutional layer\subfig{right} of a \CNN model, always violates the LPP for at least one sensitive attribute (\textcolor{red!80!black}{red} means that the adversary's inference gain is larger than the features' utility gain)}
	\label{fig:layers}
    \vspace*{-2mm}
\end{figure}

\subsection{Across Model Architecture}  To show that the trade-off holds regardless of the exact model architecture, we repeat the experiment outlined in \cref{sec:empirical} using  the ResNet-18 model architecture from \cite{HeZR15} implemented by PyTorch.
Training batch size is 32, SGD learning rate is 0.01. The adversary is given access to the feature representations at the last hidden layer of the model. 

\begin{figure}[h!]
    \centering
	\includegraphics[width=\figurewidth]{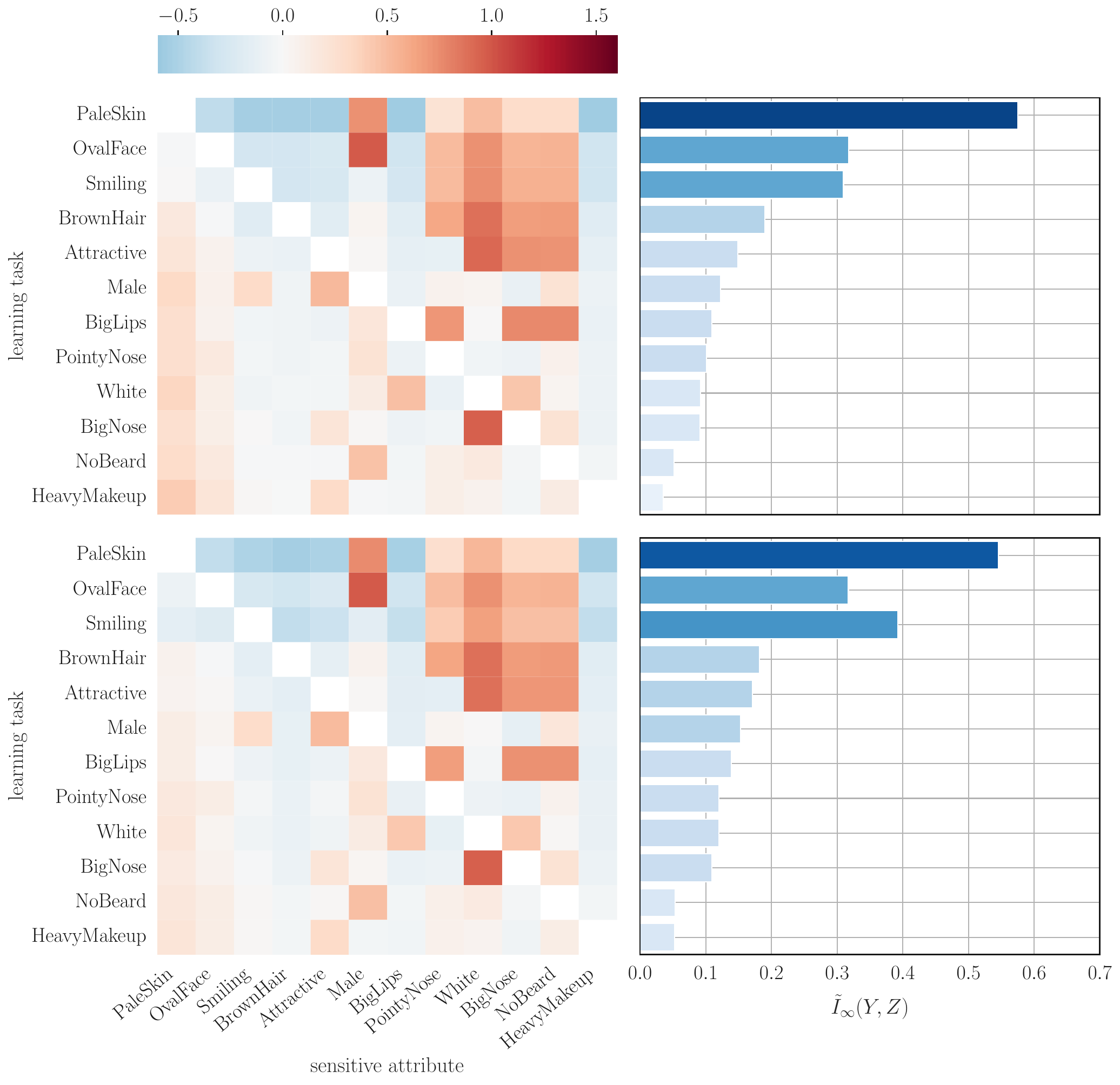}
	\caption{\textbf{The least-privilege and utility trade-off holds regardless of the model architecture.} The adversary's inference gain\subfig{left} always exceeds the utility gain\subfig{right} for at least on sensitive attribute. This hold for both a \CNN model\subfig{top} and a \ResNet model\subfig{bottom}.}
	\label{fig:resnet}
    \vspace*{-2mm}
\end{figure}

\cref{fig:resnet} compares the trade-off between utility and attribute leakage of a \CNN\subfig{top} and a \ResNet\subfig{bottom} models, both trained with standard SGD. The blue horizontal bars in \cref{fig:resnet}\subfig{right} show the model's utility for learning task $\labelY$ measured as $\gainEst(\labelY, \modelFeatures)$. The heatmaps in \cref{fig:resnet}\subfig{left} show the difference between the adversary's inference gain and the model's utility $\Delta_\mathsf{Adv} = \gainEst(\labelS, \modelFeatures \mid \labelY) - \gainEst(\labelY, \modelFeatures)$. Each row corresponds to a different learning task $\labelY$, each column represents a different sensitive attribute targeted by the adversary. We observe that regardless of the model architecture, for any learning task there always exists a sensitive attribute for which $\Delta_\mathsf{Adv}$ and thus violates the LPP.

\subsection{Across Datasets}
We ran an additional experiment to demonstrate that the strict trade-off between model utility and the LPP also holds on a very different type of dataset and model. As for tabular data, together with image data, sharing feature encodings instead of raw data is often suggested as a solution to limit harmful inferences, we choose the Texas Hospital dataset~\citep{Texas} and the TabNet model architecture~\citep{ArikP21} for these experiments.

\para{Data} The Texas Hospital Discharge dataset~\citep{Texas} is a large public use data file provided by the Texas Department of State Health Services. The dataset we use consists of 5,202,376 records uniformly sampled from a pre-processed data file that contains patient records from the year 2013. We retain $18$ data attributes of which $11$ are categorical and $7$ continuous.

\para{Experiment Setup}  In each experiment, we select one attribute as the model's learning task $\labelY$ and a second attribute as the sensitive attribute $\labelS$ targeted by the adversary.
We repeat each experiment $5$ times to capture randomness of our measurements for both the model and adversary, and show average results across all $5$ repetitions.
At the start of the experiment, we split the data into the three sets $\dataset_T$, $\dataset_E$, and $\dataset_A$. We train a TabNet model on the train set $\dataset_T$ for the chosen learning task and then estimate the model's utility on the evaluation set $\dataset_E$.  We measure the utility of the model-generated representations as the multiplicative gain $\gainEst(\labelY; \modelFeatures)$.
After model training and evaluation, we train both the label-only and features adversary on the auxiliary data $\dataset_A$. The features adversary is given access to a record's representation at the last encoding layer of the TabNet encoder (see \cite{ArikP21} for details of the model architecture). For a given sensitive attribute $\labelS$, we estimate the adversary's gain as $\gainEst(\labelS, \modelFeatures \mid \labelY)$.

\begin{figure}[h!]
    \centering
	\includegraphics[width=\figurewidth]{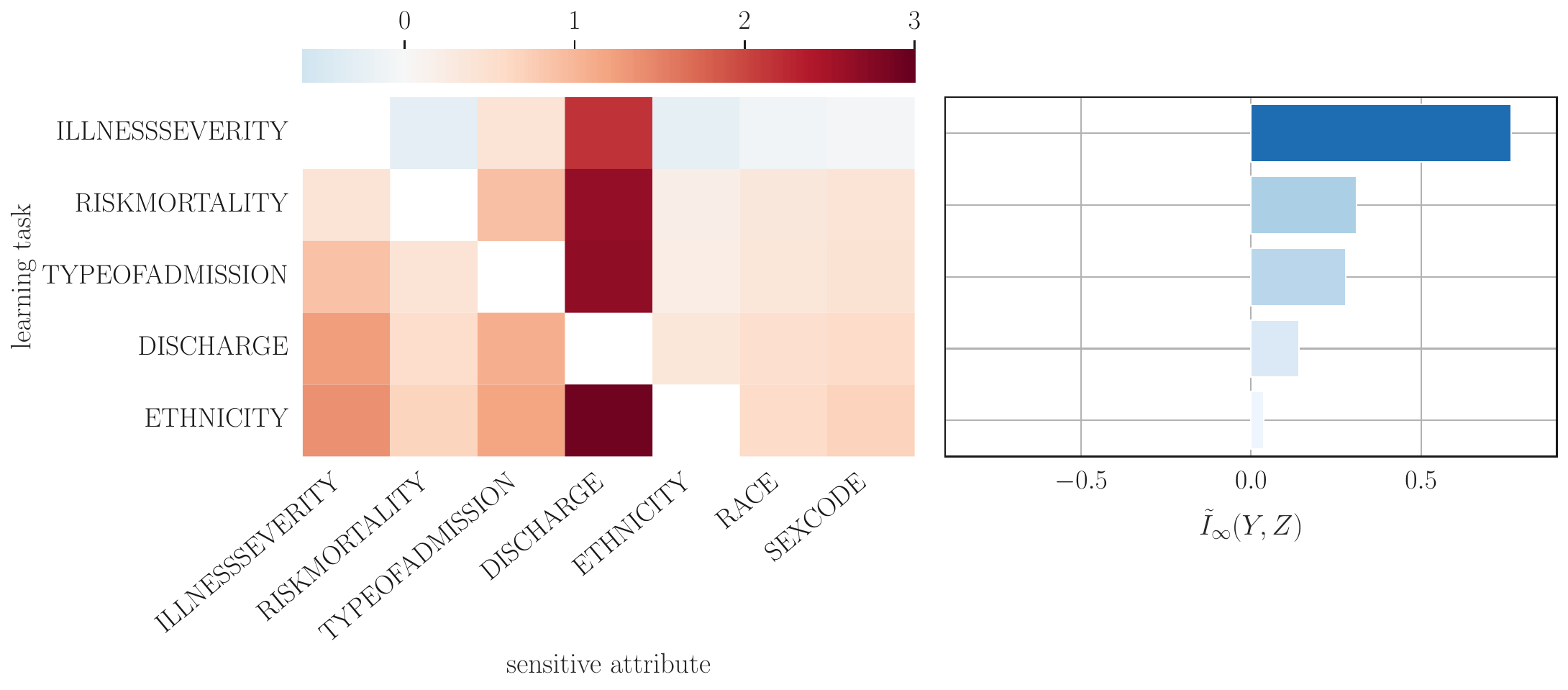}
	\caption{Attribute leakage\subfig{left} and model utility\subfig{right} for a TabNet model trained on the Texas Hospital dataset}
	\label{fig:tabnet}
\end{figure}

As above, the bar chart in \cref{fig:tabnet}\subfig{right} shows the model's utility for learning task $\labelY$ indicated in each row measured as $\gainEst(\labelY, \modelFeatures)$. The heatmaps in \cref{fig:tabnet}\subfig{left} show the difference between the adversary's inference gain and the model's utility $\Delta_{Adv} = \gainEst(\labelS, \modelFeatures \mid \labelY) - \gainEst(\labelY, \modelFeatures)$. As on the LFWA+ dataset, for any learning task there always exists a sensitive attribute for which an adversary gains an advantage from observing a target record's feature representation and $\Delta_\mathsf{Adv} > 0$. This demonstrates the strict trade-off between utility and the LPP as predicted by \cref{th:lpl_impossibility}.

%
%
%

\end{document}



